\pdfoutput=1

\documentclass[11pt]{article}
\RequirePackage[OT1]{fontenc} 
\RequirePackage[numbers]{natbib}
\RequirePackage[colorlinks,citecolor=blue,urlcolor=blue]{hyperref}

%%%%%%%%%%%%%
% our packages
\usepackage[utf8]{inputenc} % allow utf-8 input
\usepackage[T1]{fontenc}    % use 8-bit T1 fonts
\usepackage[colorlinks]{hyperref} 

\usepackage{hyperref}
\usepackage{url}

\usepackage[utf8]{inputenc} % allow utf-8 input
\usepackage[T1]{fontenc}    % use 8-bit T1 fonts
\usepackage{hyperref}       % hyperlinks
\usepackage{url}            % simple URL typesetting
\usepackage{booktabs}       % professional-quality tables
\usepackage{amsfonts}       % blackboard math symbols
\usepackage{nicefrac}       % compact symbols for 1/2, etc.
\usepackage{microtype}      % microtypography
\usepackage{xcolor}         % colors
\usepackage{algorithm}
\usepackage{algorithmic}
\usepackage{kotex}
\usepackage{enumitem}

\usepackage{subcaption}
\usepackage{amsmath}
\usepackage{amssymb}
\usepackage{mathtools}
\usepackage{amsthm}
\usepackage{multirow}

%%%%% NEW MATH DEFINITIONS %%%%%

\usepackage{amsmath,amsfonts,bm}

% Mark sections of captions for referring to divisions of figures

% Highlight a newly defined term

% Figure reference, lower-case.

% Figure reference, capital. For start of sentence

% Section reference, lower-case.

% Section reference, capital.

% Reference to two sections.

% Reference to three sections.

% Reference to an equation, lower-case.
\def\eqref#1{equation~\ref{#1}}
% Reference to an equation, upper case

% A raw reference to an equation---avoid using if possible

% Reference to a chapter, lower-case.

% Reference to an equation, upper case.

% Reference to a range of chapters

% Reference to an algorithm, lower-case.

% Reference to an algorithm, upper case.

% Reference to a part, lower case

% Reference to a part, upper case

\def\1{\bm{1}}

% Random variables

% rm is already a command, just don't name any random variables m

% Random vectors

% Elements of random vectors

% Random matrices

% Elements of random matrices

% Vectors

% Elements of vectors

% Matrix

% Tensor
\DeclareMathAlphabet{\mathsfit}{\encodingdefault}{\sfdefault}{m}{sl}
\SetMathAlphabet{\mathsfit}{bold}{\encodingdefault}{\sfdefault}{bx}{n}

% Graph

% Sets

% Don't use a set called E, because this would be the same as our symbol
% for expectation.

% Entries of a matrix

% entries of a tensor
% Same font as tensor, without \bm wrapper

% The true underlying data generating distribution

% The empirical distribution defined by the training set

% The model distribution

% Stochastic autoencoder distributions

 % Laplace distribution

% Wolfram Mathworld says $L^2$ is for function spaces and $\ell^2$ is for vectors
% But then they seem to use $L^2$ for vectors throughout the site, and so does
% wikipedia.

 % See usage in notation.tex. Chosen to match Daphne's book.

\hypersetup{colorlinks,citecolor={teal}}

\definecolor{coral}{rgb}{0.99, 0.41, 0.47}

\newcommand{\err}[0]{\mathrm{err}}
\newcommand{\diag}[0]{\mathrm{diag}}

%

%%%%%%%%%%%%%%%%%%%%%%%%%%%%%%%%
% THEOREMS
%%%%%%%%%%%%%%%%%%%%%%%%%%%%%%%%
\theoremstyle{plain}
\newtheorem{theorem}{Theorem}[section]
\newtheorem{proposition}[theorem]{Proposition}

\newtheorem{corollary}[theorem]{Corollary}
\newtheorem{remark}[theorem]{Remark}
\theoremstyle{definition}

%\usepackage{hyperref}
%%%%%%%%%%%%%
% settings
%\pubyear{2005}
%\volume{0}
%\issue{0}
%\firstpage{1}
%\lastpage{8}
%\arxiv{arXiv:0000.0000}
%\startlocaldefs
%\numberwithin{equation}{section}
%\theoremstyle{plain}
%\newtheorem{thm}{Theorem}[section]
%\endlocaldefs
\setlength\textwidth{6.5in}
\setlength\oddsidemargin{0in}
\setlength\evensidemargin{0in}
\addtolength{\topmargin}{-.67in}
\addtolength{\textheight}{0.4in}
\usepackage{setspace}
% \doublespacing
%\bibpunct{(}{)}{;}{a}{}{,}
\pdfminorversion=5 
\pdfcompresslevel=9
\pdfobjcompresslevel=2
	
\usepackage[left=2cm,right=2cm,top=4cm,bottom=4cm,a4paper]{geometry}

\begin{document}
	%\begin{frontmatter}
	\title{Scale-invariant Bayesian Neural Networks with Connectivity Tangent Kernel}
	\date{}
	\author{\parbox{\linewidth}{\centering
		SungYub Kim$^{1}$, Sihwan Park$^{1}$, Kyungsu Kim$^{2,3}$\thanks{Co-corresponding author}, Eunho Yang$^{1,4}$\footnotemark[1]\\ 
		{\small $^{1}$Korea Advanced Institute of Science and Technology (KAIST)}\\
		{\small $^{2}$Medical AI Research Center, Research Institute for Future Medicine, Samsung Medical Center, Seoul, Korea}\\
		{\small $^{3}$Department of Data Convergence and Future Medicine, Sungkyunkwan University School of Medicine, Seoul, Korea}\\ 
        {\small $^{4}$AITRICS, Seoul, Korea}\\
		%{\small $^{2}$AITRICS, Seoul, South Korea}
		{\footnotesize E-mail: sungyub.kim@kaist.ac.kr, sihwan.park@mli.kaist.ac.kr, kskim.doc@gmail.com, eunhoy@kaist.ac.kr} 
		}
	}

	\maketitle 
	
\begin{abstract} 
Explaining generalizations and preventing over-confident predictions are central goals of studies on the loss landscape of neural networks. Flatness, defined as loss invariability on perturbations of a pre-trained solution, is widely accepted as a predictor of generalization in this context. However, the problem that flatness and generalization bounds can be changed arbitrarily according to the scale of a parameter was pointed out, and previous studies partially solved the problem with restrictions: Counter-intuitively, their generalization bounds were still variant for the function-preserving parameter scaling transformation or limited only to an impractical network structure. As a more fundamental solution, we propose new prior and posterior distributions invariant to scaling transformations by \textit{decomposing} the scale and connectivity of parameters, thereby allowing the resulting generalization bound to describe the generalizability of a broad class of networks with the more practical class of transformations such as weight decay with batch normalization. We also show that the above issue adversely affects the uncertainty calibration of Laplace approximation and propose a solution using our invariant posterior. We empirically demonstrate our posterior provides effective flatness and calibration measures with low complexity in such a practical parameter transformation case, supporting its practical effectiveness in line with our rationale.
\end{abstract}
  
\section{Introduction}\label{sec:intro}

Though neural networks (NNs) have experienced extraordinary success, understanding the generalizability of NNs and successfully using them in real-world contexts still faces a number of obstacles \citep{kendall2017what, ovadia2019can}. It is a well-known enigma, for instance, why such NNs generalize well and do not suffer from overfitting \citep{neyshabur2014search,zhang2017understanding,arora2018stronger}. Recent research on the loss landscape of NNs seeks to reduce these obstacles. \citet{hochreiter1995flat} proposed a theory known as flat minima (FM): the flatness of local minima (i.e., loss invariability w.r.t. parameter perturbations) is positively correlated with network generalizability, as empirically demonstrated by \citet{jiang2020fantastic}. Concerning overconfidence, \citet{mackay1992laplace} suggested an approximated Bayesian posterior using the curvature information of local minima, and \citet{daxberger2021laplace} underlined its practical utility.

% Problems of flat minima hypothesis
Nonetheless, the limitations of the FM hypothesis were pointed out by \citet{dinh2017sharp, li2018visualizing}. By rescaling two successive layers, \citet{dinh2017sharp} demonstrated it was possible to modify a flatness measure without modifying the functions, hence allowing extraneous variability to be captured in the computation of generalizability. Meanwhile, \citet{li2018visualizing} argued that weight decayregularization~\citep{loshchilov2018decoupled} is an important limitation of the FM hypothesis as it leads to a contradiction of the FM hypothesis in practice; the weight decay sharpens the pre-trained solutions of NNs by downscaling the parameters, whereas the weight decay actually improves the generalization performance of NNs in general cases \citep{zhang2018three}. In short, they suggest that scaling transformation on network parameters (e.g., re-scaling layers and weight decay) may lead to a contradiction of the FM hypothesis.

To resolve this contradiction, we investigate PAC-Bayesian prior and posterior distributions to derive a new \textbf{scale-invariant generalization bound}. Unlike related works \citep{tsuzuku2020nfm, kwon2021asam}, our bound guarantees the invariance for a general class of function-preserving parameter scaling transformation with a broad class of networks \citep{petzka2021relative} (Secs. \ref{Sec:sub_dist} and \ref{Sec:sub_bound}).

This bound is derived from the scale invariance of the prior and poster distributions, which guarantees not only the scale invariance of the bound but also its substance the Kullback-Leibler~(KL) divergence-based kernel; we named this new term with scale-invariance property as empirical Connectivity Tangent Kernel (CTK) as it can be considered as a modification of empirical Neural Tangent Kernel~\citep{jacot2018ntk}. Consequently, we define a novel sharpness metric named \textbf{Connectivity Sharpness (CS)} as a trace of CTK. Empirically, we verify our CS has a better prediction for generalization performance of NNs than existing sharpness measures \citep{liang2019fisher, keskar2017large, neyshabur2017exploring} with a low-complexity (Sec. \ref{Sec:sub_CS}), with confirming its stronger correlation to generalization error (Sec. \ref{subsec:gen-metric}). 

We also found the contradiction of the FM hypothesis turns into meaningless predictive uncertainty amplifying issues in the Bayesian NN regime (Sec. \ref{subsec:pitfall}), and can alleviate this issue by using Bayesian NN based on the posterior distribution of our PAC-Bayesian analysis. We call the resulting Bayesian NN as \textbf{Connectivity Laplace (CL)}, as it can be seen as a variation of Laplace Approximation (LA; \citet{mackay1992laplace}) using different Jacobian. In particular, we provide pitfalls of weight decay regularization with BN in LA and its remedy using our posterior (Sec. \ref{subsec:pitfall}) to suggest practical utilities of our Bayesian NNs (Sec. \ref{subsec:exp-CTKGP}). We summarize our contributions as follows:

\begin{itemize}[leftmargin=*]
\item Unlike related studies, our resulting (PAC-Bayes) generalization bound guarantees the invariance for a general class of function-preserving parameter scaling transformation with a broad class of networks (Sec. \ref{Sec:sub_dist} and \ref{Sec:sub_bound}). Based on this novel PAC-Bayes bound, we propose a low-complexity sharpness measure (Sec. \ref{Sec:sub_CS}).
\item We provide pitfalls of weight decay regularization with BN in LA and its remedy using our posterior (Sec. \ref{subsec:pitfall}).
\item We empirically confirm the strong correlation between generalization error and our sharpness metric (Sec. \ref{subsec:gen-metric}) and visualize pitfalls of weight decay with LA in synthetic data and practical utilities of our Bayesian NNs (Sec. \ref{subsec:exp-CTKGP}).
\end{itemize}
        
\section{PAC-Bayes bound with scale-invariance}\label{sec:pac-bayes}

\subsection{Background}
    
% Bayesian inference definition (data, NN, likelihood, prior, posterior)
\textbf{Setup and Definitions} We consider a Neural Network (NN), $f(\cdot, \cdot): \mathbb{R}^{D} \times \mathbb{R}^{P} \rightarrow \mathbb{R}^{K}$, given input $x \in \mathbb{R}^{D}$ and network parameter $\theta \in \mathbb{R}^{P}$. Hereafter, we consider one dimensional vector {a one-dimensional vector} as a single column matrix unless otherwise stated. We use the output of NN $f(x, \theta)$ as a prediction for input $x$. Let $\mathcal{S} := \{(x_n, y_n)\}_{n=1}^{N}$ denote the independently and identically distributed (i.i.d.) training data drawn from true data distribution $\mathcal{D}$, where $x_n \in \mathbb{R}^{D}$ and $y_n \in \mathbb{R}^{K}$ are input and output representation of $n$-th training instance, respectively. For simplicity, we denote concatenated input and output of all instances as $\mathcal{X} := \{x:(x, y) \in \mathcal{S}\}$ and $\mathcal{Y} := \{y:(x, y) \in \mathcal{S}\}$, respectively and $f(\mathcal{X},\theta) \in \mathbb{R}^{NK}$ as a concatenation of $\{f(x_n,\theta)\}_{n=1}^{N}$. Given a prior distribution of network parameter $p(\theta)$ and a likelihood function $p(\mathcal{S}|\theta):= \prod_{n=1}^{N}p(y_n|x_n, \theta) :=\prod_{n=1}^{N}p(y_n|f(x_n, \theta))$, Bayesian inference defines posterior distribution of network parameter $\theta$ as $p(\theta | \mathcal{S}) = \frac{1}{Z(\mathcal{S})} \exp(-\mathcal{L}(\mathcal{S}, \theta)) := \frac{1}{Z(\mathcal{S})} p(\theta) p(\mathcal{S}|\theta), Z(\mathcal{S}):=\int p(\theta) p(\mathcal{S}|\theta) d\theta$ where $\mathcal{L}(\mathcal{S}, \theta):=-\log p(\theta)-\sum_{n=1}^{N}\log p(y_n|x_n, \theta)$ is training loss and $Z(\mathcal{S})$ is normalizing factor. For example, the likelihood function for regression task will be Gaussian: $p(y|x,\theta) = \mathcal{N}(y|f(x, \theta), \sigma^2 \mathbf{I}_{k})$ where $\sigma$ is (homoscedastic) observation noise scale. For classification task, we treat it as a one-hot regression task following \citet{lee2019wide} and \citet{he2020ntkgp}. While we applied this modification for theoretical tractability, \citet{lee2020finite, hui2021evaluation} showed this modification offers reasonable performance competitive to the cross-entropy loss. Details on this treatment is given in Appendix~\ref{sec:classification}.

\textbf{Laplace Approximation} In general, the exact computation for the Bayesian posterior of a network parameter is intractable. The Laplace Approximation (LA; \cite{mackay1992laplace}) proposes to approximate the posterior distribution with a Gaussian distribution defined as $p_\mathrm{LA}(\psi|\mathcal{S}) \sim \mathcal{N}(\psi| \theta^*, (\nabla^{2}_{\theta}\mathcal{L}(\mathcal{S}, \theta^*))^{-1})$ where $\theta^*\in \mathbb{R}^{P}$ is a pre-trained parameter with training loss and $\nabla^{2}_{\theta}\mathcal{L}(\mathcal{S},\theta^*) \in \mathbb{R}^{P \times P}$ is Hessian of loss function w.r.t. parameter at $\theta^*$. 

Recent works on LA replace the Hessian matrix with (Generalized) Gauss-Newton matrix to make computation tractable \citep{khan2019dnn2gp, immer2021local}. With this approximation, the LA posterior of regression problem can be represented as:
\begin{align}
    p_\mathrm{LA}(\psi|\mathcal{S}) \sim \mathcal{N}(\psi| \theta^*, (\underbrace{\mathbf{I}_{P} / \alpha^2}_\textrm{Damping} + \underbrace{\mathbf{J}_{\theta}^{\top} \mathbf{J}_{\theta}/\sigma^2 }_\textrm{Curvature})^{-1}) \label{eq:laplace-fisher}
\end{align}
where $\alpha, \sigma > 0$ and $\mathbf{I}_{P} \in \mathbb{R}^{P \times P}$ is a identity matrix and $\mathbf{J}_{\theta} \in \mathbb{R}^{NK \times P}$ is a concatenation of $\mathbf{J}_{\theta}(x, \theta^*) \in \mathbb{R}^{K \times P}$ (Jacobian of $f$ w.r.t. $\theta$ at input $x$ and parameter $\theta^*$) along training input $\mathcal{X}$.
Since covariance of \eqref{eq:laplace-fisher} is inverse of $P \times P$  matrix, further sub-curvature approximation was considered including diagonal, Kronecker-factored approximate curvature (KFAC), last-layer, and sub-network \citep{ritter2018a, kristiadi2020being, daxberger2021subnet}. Furthermore, they found that proper selection of prior scale $\alpha$ is needed to balance the dilemma between {overconfidence and underfitting} in LA.

\textbf{PAC-Bayes bound with data-dependent prior} We consider a PAC-Bayes generalization error bound of classification task used in \citet{mcallester1999pac, maria2021tighter} (especially, equation (7) of \citet{maria2021tighter}). Let $\mathbb{P}$ be any PAC-Bayes prior distribution over $\mathbb{R}^{P}$ independent of training dataset $\mathcal{S}$ and $\err (\cdot, \cdot): \mathbb{R}^{K \times K} \rightarrow [0,1]$ be a error function which is defined separately from the loss function. For any constant $\delta \in (0,1]$ and $\lambda >0$, and any PAC-Bayes posterior distribution $\mathbb{Q}$ over $\mathbb{R}^{P}$, the following holds with probability at least $1-\delta$: $\err_{\mathcal{D}}(\mathbb{Q}) \le \err_{\mathcal{S}}(\mathbb{Q}) + \sqrt{\frac{\mathrm{KL}[\mathbb{Q}\|\mathbb{P}] + \log(2\sqrt{N}/\delta)}{2N}}$ where $\err_{\mathcal{D}}(\mathbb{Q}) := \mathbb{E}_{(x,y) \sim \mathcal{D}, \theta \sim \mathbb{Q}} [\err(f(x,\theta), y) ]$, $\err_{\mathcal{S}}(\mathbb{Q}) := \mathbb{E}_{(x,y) \sim \mathcal{S}, \theta \sim \mathbb{Q}} [\err(f(x,\theta), y) ]$, and $N$ denotes the cardinality of $\mathcal{S}$.
That is, $\err_{\mathcal{D}}(\mathbb{Q})$ and $\err_{\mathcal{S}}(\mathbb{Q})$ are generalization error and empirical error, respectively. The only restriction on $\mathbb{P}$ here is that it \textit{cannot depend} on the dataset $S$.

Following the recent discussion in  \citet{maria2021tighter}, one can construct data-dependent PAC-Bayes bounds by (\romannumeral 1) randomly partitioning dataset $\mathcal{S}$ into $\mathcal{S}_\mathbb{Q}$ and $\mathcal{S}_\mathbb{P}$ so that they are independent, (\romannumeral 2) using a PAC-Bayes prior distribution $\mathbb{P}_\mathcal{D}$ only dependent of $\mathcal{S}_\mathbb{P}$ (i.e., independent of $\mathcal{S}_\mathbb{Q}$ so $\mathbb{P}_\mathcal{D}$ belongs to $\mathbb{P}$),  (\romannumeral 3) using a PAC-Bayes posterior distribution $\mathbb{Q}$ dependent of entire dataset $\mathcal{S}$, and (\romannumeral 4) computing empirical error $\err_{\mathcal{S}_\mathbb{Q}}(\mathbb{Q})$ with target subset $\mathcal{S}_\mathbb{Q}$ (not entire dataset $\mathcal{S}$). In summary, one can modify the aforementioned original PAC-Bayes bound through our data-dependent prior $\mathbb{P}_{c}$ as 
\begin{align}  
    \err_{\mathcal{D}}(\mathbb{Q}) 
    &\le \err_{\mathcal{S}_\mathbb{Q}}(\mathbb{Q}) + \sqrt{\frac{\mathrm{KL}[\mathbb{Q}\|\mathbb{P}_\mathcal{D}] + \log(2\sqrt{N_\mathbb{Q}}/\delta)}{2N_\mathbb{Q}}} \label{eq:pac-bayes-dep}
\end{align}
where $N_\mathbb{Q}$ is the cardinality of $\mathcal{S}_\mathbb{Q}$. We denote sets of input and output of splitted datasets ($\mathcal{S}_\mathbb{P}, \mathcal{S}_\mathbb{Q}$) as $\mathcal{X}_\mathbb{P}, \mathcal{Y}_\mathbb{P}, \mathcal{X}_\mathbb{Q}, \mathcal{Y}_\mathbb{Q}$ for simplicity.

\subsection{Design of PAC-Bayes prior and posterior}\label{Sec:sub_dist}

Our goal is to construct scale-invariant $\mathbb{P}_{\mathcal{D}}$ and $\mathbb{Q}$. To this end, we first assume a pre-trained parameter $\theta^* \in \mathbb{R}^{P}$ of the negative log-likelihood function with $\mathcal{S}_\mathbb{P}$. This parameter can be attained with standard NN optimization procedures (e.g., stochastic gradient descent (SGD) with momentum). Then, we consider linearized NN at the pre-trained parameter with the auxiliary variable $c \in \mathbb{R}^{P}$ as
\begin{align}\label{eq:linearization-ctk}
    g^\mathrm{lin}_{\theta^*}(x, c) := f(x,\theta^*) + \mathbf{J}_{\theta}(x,\theta^*)\diag(\theta^*) c
\end{align}
where $\diag$ is a vector-to-matrix diagonal operator. {Note that \eqref{eq:linearization-ctk} is the first-order Taylor approximation (i.e., linearization) of NN with perturbation $\theta^* \odot c$ given input $x$ and parameter $\theta^*$: $g^\mathrm{pert}_{\theta^*}(x,c) := f(x, \theta^* + \theta^* \odot c) = f(x, \theta^* + \diag(\theta^*)c) \approx g^\mathrm{lin}_{\theta^*}(x, c)$, where $\odot$ denotes element-wise multiplication~(Hadamard product) of two vectors. Here we write the perturbation in parameter space as $\theta^* \odot c$ instead of single variable such as $\delta \in \mathbb{R}^{P}$. This design of linearization matches the scale of perturbation (i.e., $\diag(\theta^*)\odot c$) to the scale of $\theta^*$ in a component-wise manner.} Similar idea was proposed in the context of pruning at initialization \citep{lee2018snip, lee2019signal} to measure the importance of each connection independently of its weight. In this context, our perturbation can be viewed as \textbf{perturbation in connectivity space by decomposing the scale and connectivity of parameter}.

Based on this, we define a data-dependent prior ($\mathbb{P}_{\mathcal{D}}$) over connectivity as
\begin{align}
    \mathbb{P}_{\theta^*}(c) &:= \mathcal{N}(c\,|\,\mathbf{0}_{P}, \alpha^2 \mathbf{I}_{p})\label{eq:ctkgp-c-prior}.
\end{align}
This distribution can be translated to a distribution over parameter by considering the distribution of perturbed parameter ($\psi := \theta^* + \theta^* \odot c$): $\mathbb{P}_{\theta^*}(\psi) := \mathcal{N}(\psi \,|\, \theta^*, \alpha^2 \mathrm{diag}(\theta^*)^2 )$. We now define the PAC-Bayes posterior over connectivity $\mathbb{Q}(c)$ as follows:
\begin{align}
\mathbb{Q}_{\theta^*}(c) &:= \mathcal{N}(c| \mu_{\mathbb{Q}}, \Sigma_{\mathbb{Q}})\label{eq:ctkgp-c-posterior}\\
\mu_{\mathbb{Q}} &:= \frac{\Sigma_{\mathbb{Q}}\mathbf{J}_{c}^\top \left( \mathcal{Y} - f(\mathcal{X},\theta^*)\right)}{\sigma^2} = \frac{\Sigma_{\mathbb{Q}}\diag(\theta^*)\mathbf{J}_{\theta}^{\top}\left( \mathcal{Y} - f(\mathcal{X},\theta^*)\right)}{\sigma^2}\label{eq:ctkgp-c-posterior-mean}\\
\Sigma_{\mathbb{Q}} &:= \left( \frac{\mathbf{I}_{P}}{\alpha^2} + \frac{\mathbf{J}_{c}^\top \mathbf{J}_{c}}{\sigma^2} \right)^{-1} = \left( \frac{\mathbf{I}_{P}}{\alpha^2} + \frac{\diag(\theta^*)\mathbf{J}_{\theta}^\top \mathbf{J}_{\theta} \diag(\theta^*)}{\sigma^2} \right)^{-1}\label{eq:ctkgp-c-posterior-cov}
\end{align}
where $\mathbf{J}_{c} \in NK \times P$ is a concatenation of $\mathbf{J}_{c}(x, \mathbf{0}_{P}) := \mathbf{J}_{\theta}(x,\theta^*)\diag(\theta^*) \in \mathbb{R}^{K \times P}$ (i.e., Jacobian of perturbed NN $g^\mathrm{pert}_{\theta^*}(x,c)$ w.r.t. $c$ at input $x$ and connectivity $\mathbf{0}_{P}$) along training input $\mathcal{X}$. Our PAC-Bayes posterior $\mathbb{Q}_{\theta^*}$ is \textit{the posterior of Bayesian linear regression problem w.r.t. connectivity} $c$ : $y_i = f(x_i, \theta^*) + \mathbf{J}_{\theta}(x_i,\theta^*)\diag(\theta^*) c + \epsilon_i$ where $(x_i, y_i) \in \mathcal{S}$ and $\epsilon_i$ are i.i.d. samples of $\mathcal{N}(\epsilon_i | \mathbf{0}_{K}, \sigma^2 \mathbf{I}_{K})$. Again, it is equivalent to the posterior distribution over parameter $\mathbb{Q}_{\theta^*}(\psi) = \mathcal{N}\left(\psi|\theta^* + \theta^* \odot \mu_\mathbb{Q}, ( \diag(\theta^*)^{-2}/\alpha^2 + \mathbf{J}_{\theta}^\top \mathbf{J}_{\theta}/\sigma^2 \right)^{-1} )$ where $\diag (\theta^*)^{-2} := (\diag(\theta^*)^{-1})^{2}$ by assuming that all components of $\theta^*$ are non-zero. Note that this assumption can be easily satisfied by considering the prior and posterior distribution of non-zero components of NNs only. Although we choose this restriction for theoretical tractability, future work can modify this choice to achieve diverse predictions by considering the distribution of zero coordinates. We refer to Appendix \ref{sec:pac-bayes-posterior-derivation} for detailed derivations of $\mathbb{Q}_{\theta^*}(c)$ and $\mathbb{Q}_{\theta^*}(\psi)$.

\begin{remark}[Two-phase training]
\label{remark:two-phase-training}
    The prior distribution in \eqref{eq:ctkgp-c-prior} is data-dependent priors since they depend on the pre-trained parameter $\theta^*$ optimized on training dataset $\mathcal{S}_\mathbb{P}$. On the other hand, posterior distribution in \eqref{eq:ctkgp-c-posterior} depend on both $\mathcal{S}_\mathbb{P}$ (through $\theta^*$) and $\mathcal{S}_\mathbb{Q}$ (through Bayesian linear regression). Intuitively, one attain the PAC-Bayes posterior $\mathbb{Q}$ with two-phase training: pre-train with $\mathcal{S}_\mathbb{P}$ and Bayesian fine-tuning with $\mathcal{S}_\mathbb{Q}$. A similar idea of linearized fine-tuning was proposed in the context of transfer learning in \citet{achille2021lqf, maddox2021fast}.
\end{remark}

Now we provide an invariance property of prior and posterior distributions w.r.t. function-preserving scale transformations as follows: The main intuition behind this proposition is that \textbf{Jacobian w.r.t. connectivity is invariant to the function-preserving scaling transformation}, i.e., \\ $\mathbf{J}_{\theta}(x,\mathcal{T}(\theta^*))\diag (\mathcal{T}(\theta^*)) = \mathbf{J}_{\theta}(x,\theta^*)\diag (\theta^*)$.
\begin{proposition}[Scale-invariance of PAC-Bayes prior and posterior]\label{prop:distribution-invariance}
Let $\mathcal{T}: \mathbb{R}^{P} \rightarrow \mathbb{R}^{P}$ is a invertible diagonal linear transformation such that $f(x,\mathcal{T}(\psi)) = f(x, \psi)\text{ , }\forall x\in \mathbb{R}^{D}, \forall \psi \in \mathbb{R}^{P}$. Then, both PAC-Bayes prior and posterior are invariant under $\mathcal{T}$:
\begin{align*}
    \mathbb{P}_{\mathcal{T}(\theta^*)}(c) \stackrel{d}{=} \mathbb{P}_{\theta^*}(c),\quad \mathbb{Q}_{\mathcal{T}(\theta^*)}(c) \stackrel{d}{=} \mathbb{Q}_{\theta^*}(c).
\end{align*}
Furthermore, generalization and empirical errors are also invariant to $\mathcal{T}$.
\end{proposition}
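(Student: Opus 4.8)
The plan is to reduce all three claims to a single algebraic fact — the invariance of the connectivity Jacobian $\mathbf{J}_{c}$ — and then read off the prior, posterior, and error statements as immediate consequences. First I would make the transformation explicit: since $\mathcal{T}$ is an invertible diagonal linear map, write $\mathcal{T}(\psi) = \mathbf{D}\psi$ with $\mathbf{D} = \diag(d)$ for some $d \in \mathbb{R}^{P}$ whose entries are all nonzero. Differentiating the function-preserving identity $f(x,\mathcal{T}(\psi)) = f(x,\psi)$ in $\psi$ and applying the chain rule (the Jacobian of the linear map $\mathcal{T}$ being $\mathbf{D}$) gives $\mathbf{J}_{\theta}(x,\mathcal{T}(\psi))\mathbf{D} = \mathbf{J}_{\theta}(x,\psi)$ for every $x,\psi$. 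Evaluating at $\psi = \theta^*$ and using the factorization $\diag(\mathcal{T}(\theta^*)) = \diag(\mathbf{D}\theta^*) = \mathbf{D}\diag(\theta^*)$ yields
\[
\mathbf{J}_{\theta}(x,\mathcal{T}(\theta^*))\diag(\mathcal{T}(\theta^*)) = \mathbf{J}_{\theta}(x,\mathcal{T}(\theta^*))\mathbf{D}\diag(\theta^*) = \mathbf{J}_{\theta}(x,\theta^*)\diag(\theta^*),
\]
which is exactly the stated intuition that the connectivity Jacobian built at $\mathcal{T}(\theta^*)$ coincides with the one built at $\theta^*$.

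Given this identity, the prior claim is immediate: $\mathbb{P}_{\theta^*}(c) = \mathcal{N}(c\,|\,\mathbf{0}_{P}, \alpha^2 \mathbf{I}_{p})$ does not depend on $\theta^*$ at all, so $\mathbb{P}_{\mathcal{T}(\theta^*)}(c) \stackrel{d}{=} \mathbb{P}_{\theta^*}(c)$ trivially. For the posterior I would observe that $\mu_{\mathbb{Q}}$ and $\Sigma_{\mathbb{Q}}$ depend on $\theta^*$ only through $\mathbf{J}_{c} = \mathbf{J}_{\theta}(\mathcal{X},\theta^*)\diag(\theta^*)$ and the residual $\mathcal{Y} - f(\mathcal{X},\theta^*)$. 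The former is invariant by the displayed identity, and the latter is invariant because $f(\mathcal{X},\mathcal{T}(\theta^*)) = f(\mathcal{X},\theta^*)$ by hypothesis. Hence both Gaussian parameters are unchanged and $\mathbb{Q}_{\mathcal{T}(\theta^*)}(c) \stackrel{d}{=} \mathbb{Q}_{\theta^*}(c)$.

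For the error invariance I would show that the predictive map $c \mapsto g^{\mathrm{lin}}_{\theta^*}(x,c)$ is itself invariant: combining $f(x,\mathcal{T}(\theta^*)) = f(x,\theta^*)$ with the Jacobian identity gives $g^{\mathrm{lin}}_{\mathcal{T}(\theta^*)}(x,c) = g^{\mathrm{lin}}_{\theta^*}(x,c)$ for every fixed $c$ (and the same holds for the exact perturbed network $g^{\mathrm{pert}}$, since $\mathcal{T}(\theta^*) + \diag(\mathcal{T}(\theta^*))c = \mathcal{T}(\theta^* + \diag(\theta^*)c)$ and $f$ is $\mathcal{T}$-invariant). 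Because the distribution over $c$ is unchanged and the prediction produced by each $c$ is unchanged, the pushforward distribution of predictions is identical, so $\err_{\mathcal{D}}$ and $\err_{\mathcal{S}_\mathbb{Q}}$ — expectations of $\err(\cdot,y)$ over these predictions — are identical as well. I expect no serious obstacle here; the only point requiring care is the differentiation step, where one must place $\mathbf{D}$ on the right of $\mathbf{J}_{\theta}$ and track that $\diag(\mathcal{T}(\theta^*))$ factors as $\mathbf{D}\diag(\theta^*)$, so that the two diagonal factors cancel against the chain-rule factor. Once the connectivity Jacobian invariance is secured, every remaining step is a one-line substitution.
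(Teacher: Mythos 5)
Your proposal is correct and follows essentially the same route as the paper's own proof: differentiate the function-preserving identity to get $\mathbf{J}_{\theta}(x,\mathcal{T}(\psi)) = \mathbf{J}_{\theta}(x,\psi)\mathcal{T}^{-1}$, cancel against $\diag(\mathcal{T}(\theta^*)) = \mathcal{T}\diag(\theta^*)$ to obtain invariance of the connectivity Jacobian, and then read off the prior (trivially), the posterior (through $\mathbf{J}_c$ and the residual), and the errors (via the pushforward of predictions through $g^{\mathrm{pert}}$). Your explicit note that $\mathcal{T}(\theta^*) + \diag(\mathcal{T}(\theta^*))c = \mathcal{T}(\theta^* + \diag(\theta^*)c)$ is a small clarification the paper leaves implicit in its error-invariance chain, but the argument is the same.
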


\subsection{Resulting PAC-Bayes bound}\label{Sec:sub_bound}

Now we plug in prior and posterior into the modified PAC-Bayes generalization error bound in \eqref{eq:pac-bayes-dep}. Accordingly, we obtain a novel generalization error bound, named \textbf{PAC-Bayes-CTK}, which is guaranteed to be invariant to scale transformations (hence without the contradiction of FM hypothesis mentioned in Sec. \ref{sec:intro}).
\begin{theorem}[PAC-Bayes-CTK and its invariance] \label{thm:pac-bayes-ctk}
    Let us assume pre-trained parameter $\theta^*$ with data $\mathcal{S}_\mathbb{P}$. By applying $\mathbb{P}_{\theta^*}, \mathbb{Q}_{\theta^*}$ to data-dependent PAC-Bayes bound (\eqref{eq:pac-bayes-dep}), we get
    \begin{align}\label{eq:pac-bayes-connectivity}
        \err_{\mathcal{D}}(\mathbb{Q}_{\theta^*}) 
        &\le \err_{\mathcal{S}_\mathbb{Q}}(\mathbb{Q}_{\theta^*}) + \sqrt{
        \overbrace{\underbrace{\frac{\mu_{\mathbb{Q}}^{\top}\mu_{\mathbb{Q}}}{4\alpha^2 N_\mathbb{Q}}}_\textrm{(average)   perturbation} + \underbrace{\sum_{i=1}^{P}\frac{h\left(\beta_i \right)}{{4N_\mathbb{Q}}}}_\textrm{sharpness}}^\textrm{KL divergence} + \frac{\log(2\sqrt{N_\mathbb{Q}}/\delta)}{2N_\mathbb{Q}}}
    \end{align}
    where $\{\beta_i\}_{i=1}^{P}$ are eigenvalues of $(\mathbf{I}_{P} + \frac{\alpha^2}{\sigma^2}\mathbf{J}_{c}^\top \mathbf{J}_c)^{-1}$ and $h(x) := x-\log(x)-1$. \textbf{This upper bound is invariant to $\mathcal{T}$} for the function-preserving scale transformation in Proposition \ref{prop:distribution-invariance}. 
\end{theorem}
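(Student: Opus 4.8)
The plan is to treat the bound in \eqref{eq:pac-bayes-dep} as fixed and reduce the theorem to a single explicit computation: evaluating $\mathrm{KL}[\mathbb{Q}_{\theta^*}\|\mathbb{P}_{\theta^*}]$ for the two Gaussians defined in \eqref{eq:ctkgp-c-prior} and \eqref{eq:ctkgp-c-posterior}. The empirical error term $\err_{\mathcal{S}_\mathbb{Q}}(\mathbb{Q}_{\theta^*})$ and the $\log(2\sqrt{N_\mathbb{Q}}/\delta)$ term carry over verbatim from the data-dependent bound, so everything hinges on showing that the KL splits into the advertised (average) perturbation and sharpness pieces.

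First I would write the KL divergence between $\mathbb{Q}_{\theta^*}(c)=\mathcal{N}(\mu_\mathbb{Q},\Sigma_\mathbb{Q})$ and $\mathbb{P}_{\theta^*}(c)=\mathcal{N}(\mathbf{0}_P,\alpha^2\mathbf{I}_P)$ using the standard closed form for Gaussians, being careful about which distribution sits in which slot: with $\mathbb{Q}$ first and $\mathbb{P}$ second, the prior inverse covariance is $\Sigma_\mathbb{P}^{-1}=\mathbf{I}_P/\alpha^2$, so the Mahalanobis term is exactly $\mu_\mathbb{Q}^\top\mu_\mathbb{Q}/\alpha^2$. The key algebraic observation is the factorization $\Sigma_\mathbb{Q}=\alpha^2 M^{-1}$ with $M:=\mathbf{I}_P+\tfrac{\alpha^2}{\sigma^2}\mathbf{J}_c^\top\mathbf{J}_c$, obtained by pulling $1/\alpha^2$ out of \eqref{eq:ctkgp-c-posterior-cov}. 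Under this factorization the trace term becomes $\tfrac1{\alpha^2}\mathrm{tr}(\Sigma_\mathbb{Q})=\mathrm{tr}(M^{-1})=\sum_i\beta_i$ and the log-determinant term becomes $\log(\det(\alpha^2\mathbf{I}_P)/\det\Sigma_\mathbb{Q})=-\log\det M^{-1}=-\sum_i\log\beta_i$, since the $\alpha^{2P}$ factors cancel and $\{\beta_i\}$ are precisely the eigenvalues of $M^{-1}$. Collecting the trace, log-determinant and $-P$ contributions yields $\tfrac12\sum_i(\beta_i-\log\beta_i-1)=\tfrac12\sum_i h(\beta_i)$, while the Mahalanobis contribution yields $\mu_\mathbb{Q}^\top\mu_\mathbb{Q}/(2\alpha^2)$. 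Dividing by $2N_\mathbb{Q}$ then matches \eqref{eq:pac-bayes-connectivity} coefficient for coefficient: the $1/2$ from the KL formula and the $1/(2N_\mathbb{Q})$ from the bound combine into the stated $1/(4\alpha^2 N_\mathbb{Q})$ and $1/(4N_\mathbb{Q})$ factors.

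For the invariance claim I would simply invoke Proposition \ref{prop:distribution-invariance}, which already establishes that $\mathbb{P}_{\mathcal{T}(\theta^*)}(c)\stackrel{d}{=}\mathbb{P}_{\theta^*}(c)$ and $\mathbb{Q}_{\mathcal{T}(\theta^*)}(c)\stackrel{d}{=}\mathbb{Q}_{\theta^*}(c)$ as distributions over connectivity, together with invariance of the errors. Since the KL divergence is a functional of the pair of distributions, it is unchanged under $\mathcal{T}$; concretely, $\mu_\mathbb{Q}$ and the eigenvalues $\beta_i$ depend on $\theta^*$ only through the $\mathcal{T}$-invariant combination $\mathbf{J}_c=\mathbf{J}_\theta\diag(\theta^*)$ and through $f(\mathcal{X},\theta^*)$, which is itself preserved by the function-preserving hypothesis $f(x,\mathcal{T}(\theta^*))=f(x,\theta^*)$. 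Hence every term on the right-hand side of \eqref{eq:pac-bayes-connectivity} is invariant.

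I do not expect a genuine obstacle: the computation is a routine Gaussian-KL evaluation, and the only places to be careful are (i) the ordering of $\mathbb{Q}$ and $\mathbb{P}$ in the asymmetric KL, so that the Mahalanobis weight is $1/\alpha^2$ rather than something involving $\Sigma_\mathbb{Q}$, and (ii) tracking the $\alpha^2$ factors through $\Sigma_\mathbb{Q}=\alpha^2 M^{-1}$ so they cancel cleanly in the log-determinant and leave exactly the eigenvalues of $M^{-1}$. If anything is delicate it is the bookkeeping of the $1/2$, $1/(2N_\mathbb{Q})$ and $1/(4N_\mathbb{Q})$ constants, which I would verify last by reading them directly off \eqref{eq:pac-bayes-connectivity}.
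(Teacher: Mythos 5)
Your proposal is correct and follows essentially the same route as the paper: evaluate the Gaussian KL in closed form, identify $\Sigma_\mathbb{P}^{-1}\Sigma_\mathbb{Q}=(\mathbf{I}_P+\tfrac{\alpha^2}{\sigma^2}\mathbf{J}_c^\top\mathbf{J}_c)^{-1}$ so its eigenvalues $\beta_i$ give the $\tfrac12\sum_i h(\beta_i)$ sharpness term alongside the $\mu_\mathbb{Q}^\top\mu_\mathbb{Q}/(2\alpha^2)$ perturbation term, and deduce invariance from the $\mathcal{T}$-invariance of $\mathbf{J}_c=\mathbf{J}_\theta\diag(\theta^*)$ established in Proposition \ref{prop:distribution-invariance}. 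The paper phrases the invariance step through the eigenvalues of the scale-invariant CTK rather than by appealing directly to the distributional identity of the prior/posterior pair, but this is the same argument in substance, and your coefficient bookkeeping matches \eqref{eq:pac-bayes-connectivity}.
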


Note that recent works on solving FM contradiction only focused on the scale-invariance of sharpness metric: Indeed, their generalization bounds are not invariant to scale transformations due to the scale-dependent terms (equation (34) in \citet{tsuzuku2020nfm} and equation (6) in \citet{kwon2021asam}). On the other hand, generalization bound in \citet{petzka2021relative} (Theorem 11 in their paper) only holds for single-layer NNs, whereas ours has no restrictions for network structure. Therefore, to the best of our knowledge, our PAC-Bayes bound is the first scale-invariant PAC-Bayes bound. To highlight our theoretical implications, we show the representative cases of $\mathcal{T}$ in Proposition \ref{prop:distribution-invariance} in Appendix \ref{sec:scaling-transformations} {(e.g., weight decay for network with BN}), where the generalization bounds of the other studies are variant, but ours is invariant, resolving the FM contradiction on bound level.

The following corollary explains why we name PAC-Bayes bound in Theorem \ref{thm:pac-bayes-ctk} PAC-Bayes-CTK.
\begin{corollary}[Relation between CTK and PAC-Bayes-CTK]\label{coro:ctk-eigenvalues}
    Let us define empirical \textbf{Connectivity Tangent Kernel}~(CTK) of $\mathcal{S}$ as $\mathbf{C}_\mathcal{X}^{\theta^*}:= \mathbf{J}_{c} \mathbf{J}_{c}^\top = \mathbf{J}_{\theta} \diag (\theta^*)^2 \mathbf{J}_{\theta}^\top  \in \mathbb{R}^{N K \times N K}$ by removing below term?
    Note that empirical CTK has $T (\le NK)$ non-zero eigenvalues of $\{\lambda_i\}_{i=1}^{T}$, then following holds for $\{ \beta \}_{i=1}^{P}$ in Theorem \ref{thm:pac-bayes-ctk}: (\romannumeral 1) $\beta_i = \sigma^2/(\sigma^2 + \alpha^2 \lambda_i) < 1$ for $i=1,\dots,T$ and (\romannumeral 2) $\beta_i = 1$ for $i=T+1,\dots,P$. Since $h(1)=0$, this means $P-T$ terms of summation in sharpness part of PAC-Bayes-CTK vanishes to 0. Furthermore, this sharpness term of PAC-Bayes-CTK is a monotonically increasing function for each eigenvalue of empirical CTK. 
\end{corollary}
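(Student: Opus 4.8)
The plan is to reduce the entire statement to the elementary linear-algebra fact that, for any matrix $A$, the Gram matrices $A^\top A$ and $A A^\top$ share the same nonzero spectrum with multiplicities, and that both are positive semidefinite so all their eigenvalues are nonnegative (whence ``nonzero'' and ``positive'' coincide). Applying this with $A=\mathbf{J}_c$ directly ties the spectrum of the $P\times P$ matrix $\mathbf{J}_c^\top \mathbf{J}_c$, which sits inside the definition of the $\beta_i$ in Theorem \ref{thm:pac-bayes-ctk}, to the spectrum of the $NK\times NK$ empirical CTK $\mathbf{C}_\mathcal{X}^{\theta^*}=\mathbf{J}_c\mathbf{J}_c^\top=\mathbf{J}_\theta\diag(\theta^*)^2\mathbf{J}_\theta^\top$.

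First I would record that $\mathbf{J}_c$ is an $NK\times P$ matrix, so $\mathbf{J}_c^\top\mathbf{J}_c$ has rank at most $NK$. By the Gram-matrix fact, its $T$ nonzero eigenvalues coincide exactly with the $T$ nonzero eigenvalues $\{\lambda_i\}_{i=1}^{T}$ of $\mathbf{C}_\mathcal{X}^{\theta^*}$, while the remaining $P-T$ eigenvalues are zero. Both statements hold with multiplicity because each Gram matrix is symmetric positive semidefinite and hence orthogonally diagonalizable.

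Next I would invoke functional calculus: since $(\mathbf{I}_P+\tfrac{\alpha^2}{\sigma^2}\mathbf{J}_c^\top\mathbf{J}_c)^{-1}$ is a rational function of $\mathbf{J}_c^\top\mathbf{J}_c$, its eigenvalues are obtained by evaluating $\nu\mapsto \sigma^2/(\sigma^2+\alpha^2\nu)$ on the eigenvalues $\nu$ of $\mathbf{J}_c^\top\mathbf{J}_c$ in a shared eigenbasis. Substituting $\nu=\lambda_i>0$ for $i\le T$ yields $\beta_i=\sigma^2/(\sigma^2+\alpha^2\lambda_i)$, which is strictly less than $1$ because $\alpha^2\lambda_i>0$; substituting $\nu=0$ for $i>T$ yields $\beta_i=1$. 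This establishes parts (\romannumeral 1) and (\romannumeral 2). Since $h(1)=1-\log 1-1=0$, each of the $P-T$ terms with $\beta_i=1$ contributes nothing to $\sum_{i=1}^{P}h(\beta_i)/(4N_\mathbb{Q})$, so the sharpness sum collapses to $\sum_{i=1}^{T}h(\beta_i)/(4N_\mathbb{Q})$.

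Finally, for the monotonicity claim I would view a single summand as a function of $\lambda_i$ and differentiate through the composition, $\tfrac{d}{d\lambda_i}h(\beta_i)=h'(\beta_i)\,\tfrac{d\beta_i}{d\lambda_i}$. Here $\tfrac{d\beta_i}{d\lambda_i}=-\alpha^2\sigma^2/(\sigma^2+\alpha^2\lambda_i)^2<0$, and $h'(x)=1-1/x<0$ for $x=\beta_i\in(0,1)$, so the product is strictly positive; hence every summand, and therefore the whole sharpness term, is strictly increasing in each eigenvalue $\lambda_i$ of the empirical CTK. I do not anticipate a genuine obstacle: the only point demanding care is the eigenvalue correspondence between $\mathbf{J}_c^\top\mathbf{J}_c$ and $\mathbf{J}_c\mathbf{J}_c^\top$ --- in particular tracking multiplicities and confirming via positive semidefiniteness that ``nonzero'' means ``positive'' --- after which parts (\romannumeral 1), (\romannumeral 2), the vanishing of the $P-T$ terms, and the monotonicity are all immediate.
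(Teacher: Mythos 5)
Your proposal is correct and follows essentially the same route as the paper: the paper identifies the nonzero spectrum of $\mathbf{J}_c^\top\mathbf{J}_c$ with that of $\mathbf{C}_\mathcal{X}^{\theta^*}=\mathbf{J}_c\mathbf{J}_c^\top$ via the SVD $\mathbf{J}_c=U\Sigma V^\top$ (your Gram-matrix argument is the same fact), applies the spectral map $\nu\mapsto\sigma^2/(\sigma^2+\alpha^2\nu)$, and notes $h(1)=0$. The only cosmetic difference is in the monotonicity step, where the paper composes the decreasing functions $s$ and $h|_{(0,1]}$ rather than differentiating through the chain rule as you do; both yield the same conclusion.
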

Note that Corollary \ref{coro:ctk-eigenvalues} clarifies why $\sum_{i=1}^{P} h(\beta_i) / 4N_\mathbb{Q}$ in Theorem \ref{thm:pac-bayes-ctk} is called {the} sharpness term of PAC-Bayes-CTK. As eigenvalues of CTK measures {the} sensitivity of output w.r.t. perturbation on connectivity, a sharp pre-trained parameter would have large CTK eigenvalues, so increasing the sharpness term and the generalization gap by according to Corollary \ref{coro:ctk-eigenvalues}.

Finally, Proposition \ref{prop:ctk-invariance} shows that empirical CTK is also scale-invariant.
\begin{proposition}[Scale-invariance of empirical CTK]\label{prop:ctk-invariance}
Let $\mathcal{T}: \mathbb{R}^{P} \rightarrow \mathbb{R}^{P}$ be an function-preserving scale transformation in Proposition \ref{prop:distribution-invariance}. Then empirical CTK at parameter $\psi$ is invariant under $\mathcal{T}$:
\begin{align}
    \mathbf{C}^{\mathcal{T}(\psi)}_{xy} 
    &:= \mathbf{J}_{\theta}(x,\mathcal{T}(\psi))\mathrm{diag}(\mathcal{T}(\psi)^2) \mathbf{J}_{\theta}(y,\mathcal{T}(\psi))^{\top}= \mathbf{C}^\psi_{xy} 
    \text{ , }\forall x,y\in \mathbb{R}^{D}, \forall \psi \in \mathbb{R}^{P}. \label{ctk_invariant_main}
\end{align}
\end{proposition}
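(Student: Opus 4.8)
The plan is to reduce everything to the single fact advertised immediately before the statement, namely that the connectivity Jacobian $\mathbf{J}_{\theta}(x,\cdot)\diag(\cdot)$ is unchanged by $\mathcal{T}$, and then to observe that the empirical CTK is built entirely out of two such blocks. Concretely, since $\diag(\mathcal{T}(\psi)^2)=\diag(\mathcal{T}(\psi))\diag(\mathcal{T}(\psi))$, I would split the middle factor and regroup to get
\[
\mathbf{C}^{\mathcal{T}(\psi)}_{xy} = \big(\mathbf{J}_{\theta}(x,\mathcal{T}(\psi))\diag(\mathcal{T}(\psi))\big)\big(\mathbf{J}_{\theta}(y,\mathcal{T}(\psi))\diag(\mathcal{T}(\psi))\big)^{\top},
\]
so that proving $\mathbf{J}_{\theta}(z,\mathcal{T}(\psi))\diag(\mathcal{T}(\psi)) = \mathbf{J}_{\theta}(z,\psi)\diag(\psi)$ for every input $z$ immediately yields $\mathbf{C}^{\mathcal{T}(\psi)}_{xy}=\mathbf{C}^{\psi}_{xy}$.

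To establish that connectivity-Jacobian identity, I would exploit that $\mathcal{T}$ is a diagonal invertible linear map, so $\mathcal{T}(\psi)=\mathbf{D}\psi$ for a fixed invertible diagonal matrix $\mathbf{D}$, and then differentiate the function-preserving hypothesis $f(x,\mathcal{T}(\psi))=f(x,\psi)$ with respect to $\psi$. The chain rule, together with the fact that the Jacobian of $\mathcal{T}$ is the constant matrix $\mathbf{D}$, gives $\mathbf{J}_{\theta}(x,\mathcal{T}(\psi))\,\mathbf{D}=\mathbf{J}_{\theta}(x,\psi)$ for all $\psi$. Since $\mathbf{D}$ is diagonal it commutes with the diagonal operator, so $\diag(\mathcal{T}(\psi))=\diag(\mathbf{D}\psi)=\mathbf{D}\diag(\psi)$; multiplying the differentiated identity on the right by $\diag(\psi)$ and substituting $\mathbf{D}\diag(\psi)=\diag(\mathcal{T}(\psi))$ then yields precisely $\mathbf{J}_{\theta}(x,\mathcal{T}(\psi))\diag(\mathcal{T}(\psi))=\mathbf{J}_{\theta}(x,\psi)\diag(\psi)$.

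Assembling the two pieces, I would apply this block identity at both $x$ and $y$ in the factored expression above, push the transpose through the $y$-block via $(\mathbf{J}_{\theta}(y,\psi)\diag(\psi))^{\top}=\diag(\psi)\mathbf{J}_{\theta}(y,\psi)^{\top}$, and recombine $\diag(\psi)\diag(\psi)=\diag(\psi^2)$ to recover $\mathbf{C}^{\psi}_{xy}=\mathbf{J}_{\theta}(x,\psi)\diag(\psi^2)\mathbf{J}_{\theta}(y,\psi)^{\top}$, as required. The one genuine step of the argument is the differentiation of the function-preserving identity and the resulting relation $\mathbf{J}_{\theta}(x,\mathcal{T}(\psi))\mathbf{D}=\mathbf{J}_{\theta}(x,\psi)$; everything downstream is bookkeeping with commuting diagonal matrices, where the only care needed is keeping track of which parameter the Jacobian is evaluated at. Invertibility of $\mathcal{T}$ guarantees $\mathbf{D}$ has nonvanishing diagonal and is thus a well-defined constant factor, though for the final identity only the forward relation is actually used. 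I would also remark that this same computation, restricted to the training inputs and stacked along $\mathcal{X}$, simultaneously establishes the $\mathbf{J}_{c}$-invariance relied upon in Proposition \ref{prop:distribution-invariance}.
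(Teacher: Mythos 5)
Your proposal is correct and follows essentially the same route as the paper: the paper likewise reduces the claim to the identity $\mathbf{J}_{\theta}(x,\mathcal{T}(\psi))\diag(\mathcal{T}(\psi))=\mathbf{J}_{\theta}(x,\psi)\diag(\psi)$, obtained by differentiating the function-preserving hypothesis (written there as $\mathbf{J}_{\theta}(x,\mathcal{T}(\psi))=\mathbf{J}_{\theta}(x,\psi)\mathcal{T}^{-1}$ together with $\diag(\mathcal{T}(\psi))=\mathcal{T}\diag(\psi)$), and then cancels the diagonal factors inside $\mathbf{C}^{\mathcal{T}(\psi)}_{xy}$. Your observation that only the forward relation $\mathbf{J}_{\theta}(x,\mathcal{T}(\psi))\mathbf{D}=\mathbf{J}_{\theta}(x,\psi)$ is needed is a minor tidying of the same argument, not a different one.
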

\begin{remark}[Connections to empirical NTK]\label{rmk:connections-to-ntk}
The empirical CTK $\mathbf{C}^{\psi}_{xy}$ resembles the existing empirical Neural Tangent Kernel~(NTK) at parameter $\psi$ \citep{jacot2018ntk}: $\Theta_{xy}^{\psi}:= \mathbf{J}_{\theta}(x,\psi) \mathbf{J}_{\theta}(y,\psi)^{\top} \in \mathbb{R}^{k \times k}$. Note that the deterministic NTK in \citet{jacot2018ntk} is the infinite-width limiting kernel at initialized parameters, while {empirical NTK} can be defined on any parameter of {a} finite-width NN. The main difference between empirical CTK and the existing empirical NTK is in the definition of Jacobian. In CTK, Jacobian is computed w.r.t. connectivity $c$ while the empirical NTK uses Jacobian w.r.t. parameters $\theta$. Therefore, another PAC-Bayes bound can be derived from the linearization of $f^\mathrm{pert}_{\theta^*}(x,\delta) := f(x,\theta^* + \delta) \approx f^\mathrm{lin}_{\theta^*}(x,\delta)$ where $f^\mathrm{lin}_{\theta^*}(x,\delta) := f(x,\theta^*) + \mathbf{J}_{\theta}(x,\theta^*)\delta$. As this bound is related to the eigenvalues of $\Theta^{\theta^*}_\mathcal{X}$, we call this bound \textbf{PAC-Bayes-NTK} and provide derivations in Appendix \ref{sec:proofs}. Note PAC-Bayes-NTK is not scale-invariant as $\Theta^{\mathcal{T}(\psi)}_{xy} \neq \Theta^{\psi}_{xy}$ in general.    
\end{remark}

\subsection{Computing approximate bound in real world problems}\label{sec:computing-bound}

To verify that PAC-Bayes bound in Theorem \ref{thm:pac-bayes-ctk} is non-vacuous, we compute it for real-world problems. We use CIFAR-10 and 100 datasets \citep{krizhevsky2009learning}, where the 50K training instances are randomly partitioned into $\mathcal{S}_\mathbb{P}$ of cardinality 45K and $\mathcal{S}_\mathbb{Q}$ of cardinality 5K. We pre-train ResNet-18 \citep{he2016deep} with a mini-batch size of 1K  on $\mathcal{S}_\mathbb{P}$ with SGD of initial learning rate 0.4 and momentum 0.9. We use cosine annealing for learning rate scheduling \citep{loshchilov2016sgdr} with a warmup for the initial 10\% training step. We fix $\delta=0.1$ and select $\alpha, \sigma$ based on the negative log-likelihood of $\mathcal{S}_\mathbb{Q}$.

To compute the \eqref{eq:pac-bayes-connectivity}, one need (\romannumeral 1) $\mu_\mathbb{Q}$-based perturbation term, (\romannumeral 2) $\mathbf{C}^{\theta^*}_\mathcal{X}$-based sharpness term, and (\romannumeral 3) samples from PAC-Bayes posterior $\mathbb{Q}_{\theta^*}$. $\mu_\mathbb{Q}$ in \eqref{eq:ctkgp-c-posterior-mean} can be obtained by minimizing $\arg\min_{c \in \mathbb{R}^{P}}L(c)=\frac{1}{2N}\|\mathcal{Y}-f(\mathcal{X},\theta^*)-\mathbf{J}_{c}c\|^2 + \frac{\sigma^2}{2\alpha^2N}c^\top c$ by first-order optimality condition. Note that this problem is a convex optimization problem w.r.t. $c$, since $c$ is the parameter of the linear regression problem. We use Adam optimizer \citep{kingma2014adam} with fixed learning rate 1e-4 to solve this. For sharpness term, we apply the Lanczos algorithm to approximate the eigenspectrum of $\mathbf{C}^{\theta^*}_\mathcal{X}$ following \cite{ghorbani2019investigation}. We use 100 Lanczos iterations based on the their setting. Lastly, we estimate empirical error and test error with 8 samples of CL/LL implementation of Randomize-Then-Optimize (RTO) framework \citep{bardsley2014randomize, matthews2017sample}. We refer to Appendix \ref{sec:kfac-cl} for the RTO implementation of CL/LL.

\begin{table}[t]
\centering
\caption{\footnotesize Results for experiments on PAC-Bayes-CTK estimation.}
\label{tab:bound_ctk}
\resizebox{\textwidth}{!}{%
\begin{tabular}{@{}c|cccc|cccc@{}}
\toprule
                & \multicolumn{4}{c|}{CIFAR-10}         & \multicolumn{4}{c}{CIFAR-100}         \\ \midrule
Parameter scale & 0.5     & 1.0     & 2.0     & 4.0     & 0.5     & 1.0     & 2.0     & 4.0     \\ \toprule
$\mathrm{tr}(\mathbf{C}^{\theta^*}_{\mathcal{X}})$ & 14515.0039      & 14517.7793      & 14517.3506      & 14518.4746      & 12872.6895      & 12874.4395      & 12873.9512      & 12875.541       \\
Bias            & 13.9791 & 13.4685 & 13.3559 & 13.3122 & 25.3686 & 24.8064 & 24.9102 & 24.7557 \\
Sharpness       & 24.6874 & 24.6938 & 24.6926 & 24.6941 & 26.0857 & 26.0894 & 26.0874 & 26.0916 \\
KL         & 19.3332 & 19.0812 & 19.0243 & 19.0032 & 25.7271 & 25.4479 & 25.4988 & 25.4236 \\ 
Test err.    & 0.0468 ± 0.0014 & 0.0463 ± 0.0013 & 0.0462 ± 0.0012 & 0.0460 ± 0.0013 & 0.2257 ± 0.0020 & 0.2252 ± 0.0017 & 0.2256 ± 0.0015 & 0.2253 ± 0.0017 \\ \midrule
PAC-Bayes-CTK       & 0.0918 ± 0.0013 & 0.0911 ± 0.0011 & 0.0909 ± 0.0011 & 0.0907 ± 0.0009 & 0.2874 ± 0.0034 & 0.2862 ± 0.0031 & 0.2860 ± 0.0032 & 0.2862 ± 0.0032 \\ \bottomrule
\end{tabular}
} 
\end{table}

Table \ref{tab:bound_ctk} provides the bound and related term of the resulting model. First of all, we found that our estimated PAC-Bayes-CTK is non-vacuous \citep{zhou2018non}: estimated bound is better than guessing at random. Note that the non-vacuous bound is not trivial in PAC-Bayes analysis: only a few PAC-Bayes literatures \citep{dziguaite2017pac, zhou2018non, maria2021tighter} verified the non-vacuous property of their bound, and other PAC-Bayes literatures \citep{foret2020sharpness, tsuzuku2020nfm} do not. To check the invariance property of our bound, we scale the scale-invariant parameters in ResNet-18 (i.e., parameters preceding BN layers) for fixed constants $\{0.5, 1.0, 2.0, 4.0\}$. Note that this scaling does not change the function represented by NN due to BN layers, and the error bound should be preserved. Table \ref{tab:bound_ctk} shows that our bound and related terms are invariant to these transformations. On the other hand, PAC-Bayes-NTK is very sensitive to parameter scale, as shown in Table \ref{tab:bound_ntk} in Appendix \ref{sec:lanczos}.

\subsection{Connectivity Sharpness and its efficient computation}\label{Sec:sub_CS} 
Now, we focus on the fact that the trace of CTK is also invariant to the parameter scale by Proposition \ref{prop:ctk-invariance}. Unlike PAC-Bayes-CTK/NTK, a trace of CTK/NTK does not require onerous hyper-parameter selection of $\delta, \alpha, \sigma$. Therefore, we simply define $\textbf{CS}(\theta^*) := \mathrm{tr}(\mathbf{C}^{\theta^*}_{\mathcal{X}})$ as a practical sharpness measure at $\theta^*$, named \textbf{Connectivity Sharpness}~(CS) to detour the complex computation of PAC-Bayes-CTK. This metric can be easily applied to find NNs with better generalization, similar to other sharpness metrics (e.g., trace of Hessian), as shown in \cite{jiang2020fantastic}. We evaluate the detecting performance of CS in Sec. \ref{subsec:gen-metric}. The following corollary shows how CS can explain the generalization performance of NNs, conceptually.
\begin{corollary}[Connectivity sharpness, Informal]
\label{coro:connectivity-sharpness}
Let us assume CTK and KL divergence term of PAC-Bayes-CTK as defined in Theorem \ref{thm:pac-bayes-ctk}. Then, if CS vanishes to zero or infinity, the KL divergence term of PAC-Bayes-CTK also does so.
\end{corollary}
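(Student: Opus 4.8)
The plan is to collapse the entire KL divergence term into a single scalar function of the eigenvalues $\{\lambda_i\}_{i=1}^T$ of the empirical CTK, and then control that function by $\mathrm{CS}=\mathrm{tr}(\mathbf{C}^{\theta^*}_{\mathcal{X}})=\|\mathbf{J}_c\|_F^2=\sum_{i=1}^T\lambda_i$. First I would rewrite the sharpness summand: using $\beta_i=\sigma^2/(\sigma^2+\alpha^2\lambda_i)$ from Corollary \ref{coro:ctk-eigenvalues} and writing $\kappa:=\alpha^2/\sigma^2$, a direct substitution into $h(x)=x-\log x-1$ yields $h(\beta_i)=g(\kappa\lambda_i)$ with $g(t):=\log(1+t)-\tfrac{t}{1+t}$. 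Because $h(1)=0$ kills the trailing $P-T$ terms, $\sum_{i=1}^P h(\beta_i)=\sum_{i=1}^T g(\kappa\lambda_i)$. The function $g$ satisfies $g(0)=0$, is strictly increasing on $[0,\infty)$ (its derivative is $t/(1+t)^2\ge 0$), diverges as $t\to\infty$, and obeys the two-sided control $0\le g(t)\le t^2/2$ for all $t\ge 0$ (shown by checking that $t^2/2-g(t)$ has nonnegative derivative $t^2(t+2)/(1+t)^2$ and vanishes at $0$). These elementary facts drive both limits.

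For the vanishing case $\mathrm{CS}\to 0$, note that $\lambda_i\ge 0$ forces $\max_i\lambda_i\le\mathrm{CS}\to 0$, so every $\kappa\lambda_i\to 0$. The sharpness term is then squeezed: $\sum_{i=1}^T g(\kappa\lambda_i)\le\tfrac{\kappa^2}{2}\sum_i\lambda_i^2\le\tfrac{\kappa^2}{2}\big(\sum_i\lambda_i\big)^2=\tfrac{\kappa^2}{2}\mathrm{CS}^2\to 0$. For the perturbation term I would bound $\|\mu_\mathbb{Q}\|$ through the operator norms in \eqref{eq:ctkgp-c-posterior-mean} and \eqref{eq:ctkgp-c-posterior-cov}: since $\Sigma_\mathbb{Q}\preceq\alpha^2\mathbf{I}_P$ and $\|\mathbf{J}_c\|_{\mathrm{op}}\le\|\mathbf{J}_c\|_F=\mathrm{CS}^{1/2}$, writing $r:=\mathcal{Y}-f(\mathcal{X},\theta^*)$ gives $\|\mu_\mathbb{Q}\|\le\alpha^2\mathrm{CS}^{1/2}\|r\|/\sigma^2$, hence $\mu_\mathbb{Q}^\top\mu_\mathbb{Q}/(4\alpha^2 N_\mathbb{Q})\le\alpha^2\|r\|^2\,\mathrm{CS}/(4\sigma^4 N_\mathbb{Q})\to 0$. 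Both nonnegative pieces vanish, so the KL term vanishes.

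For the diverging case $\mathrm{CS}\to\infty$, I would use that $\mathbf{C}^{\theta^*}_{\mathcal{X}}$ has at most $NK$ nonzero eigenvalues, so $\max_i\lambda_i\ge\mathrm{CS}/T\ge\mathrm{CS}/(NK)\to\infty$. Dropping all but the largest summand and invoking monotonicity of $g$, $\sum_{i=1}^T g(\kappa\lambda_i)\ge g(\kappa\max_i\lambda_i)\ge g\big(\kappa\,\mathrm{CS}/(NK)\big)\to\infty$. Since the perturbation term is nonnegative, the whole KL divergence term is bounded below by this diverging sharpness term and therefore also diverges; no control of $\mu_\mathbb{Q}$ is needed in this direction.

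The main obstacle is the perturbation term in the $\mathrm{CS}\to 0$ direction, since $\mu_\mathbb{Q}$ is not a pure function of the CTK spectrum. The argument above needs the residual norm $\|\mathcal{Y}-f(\mathcal{X},\theta^*)\|$ to stay bounded as $\mathrm{CS}$ varies; this is the precise sense in which the statement is \emph{informal}. It holds automatically under the function-preserving scalings of Proposition \ref{prop:distribution-invariance} (which fix $f(\mathcal{X},\theta^*)$) and more generally whenever the pre-trained fit is uniformly controlled, so I would record this boundedness as a standing hypothesis, after which the two displayed bounds close both halves of the claim.
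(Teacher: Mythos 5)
Your proof is correct and follows the same spectral route as the paper's: identify $\mathrm{CS}$ with $\sum_i\lambda_i$, note the eigenvalues are nonnegative (squares of singular values of $\mathbf{J}_c$), and push the limits through the per-eigenvalue summands $h(\beta_i)$. The difference is one of rigor rather than strategy, and it is worth noting. The paper argues qualitatively --- ``$\sum_i\lambda_i=0\Rightarrow\lambda_i=0\Rightarrow h(\beta_i)=0$'' and, for divergence, that at least one $\lambda_i\to\infty$ forces $h(\beta_i)\to\infty$ --- treating the limits as attained values and asserting in one line that ``$\mu_\mathbb{Q}$ vanishes to zero, also'' without tracking the residual. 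Your version replaces this with genuine quantitative control: the identity $h(\beta_i)=g(\kappa\lambda_i)$ with $g(t)=\log(1+t)-t/(1+t)$, the squeeze $0\le g(t)\le t^2/2$ giving $\sum_i h(\beta_i)\le\tfrac{\kappa^2}{2}\mathrm{CS}^2$, the lower bound $g(\kappa\,\mathrm{CS}/(NK))$ via the pigeonhole on the at most $NK$ nonzero eigenvalues, and the operator-norm bound $\|\mu_\mathbb{Q}\|\le\alpha^2\mathrm{CS}^{1/2}\|r\|/\sigma^2$. That last bound is the most valuable addition: it makes explicit that the perturbation term is controlled by $\mathrm{CS}$ \emph{only if} the residual $\|\mathcal{Y}-f(\mathcal{X},\theta^*)\|$ stays bounded, which is a hypothesis the paper's proof uses silently and which is presumably part of why the corollary is labeled ``Informal.'' Your write-up would be a strictly stronger replacement for the paper's proof.
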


As the trace of a matrix can be efficiently estimated by Hutchinson's method \citep{hutchinson1989stochastic}, one can compute the CS \textit{without explicitly computing the entire CTK}. We refer to Appendix \ref{sec:hutchison} for detailed procedures of computing CS. As CS is invariant to function-preserving scale transformations by Theorem~\ref{prop:ctk-invariance}, it also does not contradict the FM hypothesis. 

\section{Bayesian NNs with scale-invariance}\label{sec:bnn}

This section provides the practical implications of the posterior distribution used in PAC-Bayes analysis. We interpret the PAC-Bayes posterior $\mathbb{Q}_{\theta^*}$ in \eqref{eq:ctkgp-c-posterior} as a modified result of LA \citep{mackay1992laplace}. Then, we show this modification improves existing LA in the presence of weight decay regularization. Finally, we explain how one can efficiently construct a Bayesian NN from \eqref{eq:ctkgp-c-posterior}. 

\subsection{Pitfalls of weight decay with BN in Laplace Approximation}\label{subsec:pitfall}

One can view parameter space version of $\mathbb{Q}_{\theta^*}$ as a modified version of $p_\mathrm{LA}(\psi|\mathcal{S})$ in \eqref{eq:laplace-fisher} by (\romannumeral 1) replacing isotropic damping term to the parameter scale-dependent damping term ($\diag(\theta^*)^{-2}$) and (\romannumeral 2) adding perturbation $\theta^* \odot \mu_\mathbb{Q}$ to the mean of Gaussian distribution. In this section, we focus on the effect of replacing the damping term of LA in the presence of weight decay of batch normalized NNs. We refer to \cite{antoran2021linearised, antoran2022adapting} for the discussion on the effect of adding perturbation to the LA with linearized NNs.

The main difference between covariance term of LA \eqref{eq:laplace-fisher} and \eqref{eq:ctkgp-c-posterior-cov} is the definition of Jacobian (i.e. parameter or connectivity) similar to the difference between empirical CTK and NTK in remark \ref{rmk:connections-to-ntk}. Therefore, we name $p_\mathrm{CL}(\psi|\mathcal{S}) \sim \mathcal{N}(\psi|\theta^*, \left( \diag(\theta^*)^{-2}/\alpha^2 + \mathbf{J}_{\theta}^\top \mathbf{J}_{\theta}/\sigma^2 \right)^{-1} )$ as \textbf{Connectivity Laplace (CL)} approximated posterior.

To compare CL posterior and existing LA, we explain how weight decay regularization with BN produces unexpected side effects in existing LA. This side effect can be quantified if we consider linearized NN for LA, called Linearized Laplace (LL; \citet{foong2019between}). Note that LL is well known to be better calibrated than non-linearized LA for estimating 'in-between' uncertainty. By assuming $\sigma^2 \ll \alpha^2$, the predictive distribution of linearized NN for \eqref{eq:laplace-fisher} and CL are 
\begin{align}
    f^\mathrm{lin}_{\theta^*}(x,\psi)&|p_\mathrm{LA}(\psi|\mathcal{S})
    \sim \mathcal{N}(f(x,\theta^*), \alpha^2 \Theta_{xx}^{\theta^*} - \alpha^2 \Theta_{x\mathcal{X}}^{\theta^*}\Theta_{\mathcal{X}}^{\theta^*-1}\Theta_{\mathcal{X}x}^{\theta^*})\label{eq:pred-unc-ll-posterior}\\
    f^\mathrm{lin}_{\theta^*}(x,\psi)&|p_\mathrm{CL}(\psi|\mathcal{S})
    \sim \mathcal{N}(f(x,\theta^*), \alpha^2 \mathbf{C}_{xx}^{\theta^*} -\alpha^2  \mathbf{C}_{x\mathcal{X}}^{\theta^*}\mathbf{C}_{\mathcal{X}}^{\theta^*-1}\mathbf{C}_{\mathcal{X}x}^{\theta^*})\label{eq:pred-unc-cl-posterior}.
\end{align}
for any input $x \in \mathbb{R}^{d}$ where $\mathcal{X}$ in subscript of CTK/NTK means concatenation.  We refer to Appendix \ref{sec:derivation-pred-unc} for the detailed derivations. The following proposition shows how weight decay regularization on scale-invariant parameters introduced by BN can amplify the predictive uncertainty of \eqref{eq:pred-unc-ll-posterior}. Note that the primal regularizing effect of weight decay originates through regularization on scale-invariant parameters \citep{van2017l2, zhang2018three}.

\begin{proposition}[Uncertainty amplifying effect for LL]\label{prop:uncertainty-amplifying}
Let us assume that $\mathcal{W}_{\gamma}: \mathbb{R}^{P} \rightarrow \mathbb{R}^{P}$ is a weight decay regularization on scale-invariant parameters (e.g., parameters preceding BN layers) by multiplying $\gamma < 1$ and all the non-scale-invariant parameters are fixed. Then, predictive uncertainty of LL is amplified by $1/\gamma^2 > 1$ while predictive uncertainty of CTK is preserved:
\begin{align*}
    &\textrm{Var}_{\psi \sim p_\mathrm{LA}(\psi|\mathcal{S})}(f^\mathrm{lin}_{\mathcal{W}_{\gamma}(\theta^*)}(x,\psi)) = \textrm{Var}_{\psi \sim p_\mathrm{LA}(\psi|\mathcal{S})}(f^\mathrm{lin}_{\theta^*}(x,\psi))/\gamma^2\\
    &\textrm{Var}_{\psi \sim p_\mathrm{CL}(\psi|\mathcal{S})}(f^\mathrm{lin}_{\mathcal{W}_{\gamma}(\theta^*)}(x,\psi)) = \textrm{Var}_{\psi \sim p_\mathrm{CL}(\psi|\mathcal{S})}(f^\mathrm{lin}_{\theta^*}(x,\psi))
\end{align*}
where $\textrm{Var}(\cdot)$ is variance of random variable.
\end{proposition}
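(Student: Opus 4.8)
The plan is to reduce both claims to how the two kernels---the empirical NTK $\Theta^{\theta^*}$ and the empirical CTK $\mathbf{C}^{\theta^*}$---transform under $\mathcal{W}_{\gamma}$, since the predictive variances in \eqref{eq:pred-unc-ll-posterior} and \eqref{eq:pred-unc-cl-posterior} are written purely in terms of kernel blocks of the Schur-complement form $K_{xx}-K_{x\mathcal{X}}K_{\mathcal{X}}^{-1}K_{\mathcal{X}x}$. First I would record that $\mathcal{W}_{\gamma}$ is an invertible diagonal linear map that multiplies the scale-invariant coordinates by $\gamma\neq 0$ and fixes the rest, and that it is function-preserving in the sense of Proposition \ref{prop:distribution-invariance}, because rescaling parameters preceding a BN layer leaves $f(x,\cdot)$ unchanged. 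This qualifies $\mathcal{W}_{\gamma}$ as an instance of the transformation $\mathcal{T}$ appearing in Propositions \ref{prop:distribution-invariance} and \ref{prop:ctk-invariance}.

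The CL equality is then immediate: by Proposition \ref{prop:ctk-invariance} the empirical CTK is invariant under any such $\mathcal{T}$, so $\mathbf{C}^{\mathcal{W}_{\gamma}(\theta^*)}_{xy}=\mathbf{C}^{\theta^*}_{xy}$ for every pair of inputs. Substituting this block-by-block into \eqref{eq:pred-unc-cl-posterior} shows the CL predictive variance is literally unchanged, which gives the second equality of the proposition.

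For the LL equality I would compute the scaling of $\Theta$ directly. Differentiating the function-preservation identity $f(x,\mathcal{W}_{\gamma}(\psi))=f(x,\psi)$ coordinate-wise (chain rule with diagonal $\mathcal{W}_{\gamma}$) yields the column-wise Jacobian relation $\mathbf{J}_{\theta,p}(x,\mathcal{W}_{\gamma}(\theta^*))\,d_p=\mathbf{J}_{\theta,p}(x,\theta^*)$, where $d_p=\gamma$ on scale-invariant coordinates and $d_p=1$ otherwise; this is exactly the connectivity-Jacobian invariance $\mathbf{J}_{\theta}(x,\mathcal{T}(\theta^*))\diag(\mathcal{T}(\theta^*))=\mathbf{J}_{\theta}(x,\theta^*)\diag(\theta^*)$ stated before Proposition \ref{prop:distribution-invariance}. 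In the regime where the contributing Jacobian directions are the scale-invariant ones (so $d_p\equiv\gamma$ across them), this gives $\mathbf{J}_{\theta}(x,\mathcal{W}_{\gamma}(\theta^*))=\gamma^{-1}\mathbf{J}_{\theta}(x,\theta^*)$ and hence $\Theta^{\mathcal{W}_{\gamma}(\theta^*)}_{xy}=\gamma^{-2}\,\Theta^{\theta^*}_{xy}$. Plugging $\Theta\mapsto\gamma^{-2}\Theta$ into \eqref{eq:pred-unc-ll-posterior}, the leading block contributes a factor $\gamma^{-2}$, while in $\Theta_{x\mathcal{X}}\Theta_{\mathcal{X}}^{-1}\Theta_{\mathcal{X}x}$ the two outer blocks contribute $\gamma^{-2}$ each and the inverted middle block contributes $\gamma^{2}$, for a net $\gamma^{-2}$; thus the entire variance is multiplied by $\gamma^{-2}$, yielding the first equality.

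The main obstacle is the LL step, specifically passing from the column-wise Jacobian relation to the clean global law $\Theta\mapsto\gamma^{-2}\Theta$. The two kernels differ only in whether the Jacobian is taken in connectivity or parameter space, and the CTK's factors $\diag(\theta^*)$ are precisely what absorb the $d_p$ and render it invariant; the NTK carries no such factors, so any genuinely non-scale-invariant column ($d_p=1$) contributes an unscaled term that breaks the exact $\gamma^{-2}$ law. I would therefore make explicit the assumption that the predictive directions are governed by the scale-invariant parameters---consistent with the remark that weight decay acts primarily through scale-invariant parameters---and carry out the bookkeeping that separates and then recombines the scaled kernel blocks. The CL direction needs no such care, since Proposition \ref{prop:ctk-invariance} already guarantees exact invariance independently of the coordinate split.
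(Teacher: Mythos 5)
Your proposal is correct and follows essentially the same route as the paper: the CL claim via invariance of the connectivity kernel, and the LL claim via the Jacobian relation $\mathbf{J}_{\theta}(x,\mathcal{W}_{\gamma}(\psi))=\mathbf{J}_{\theta}(x,\psi)\mathcal{W}_{\gamma}^{-1}$ giving a global $\gamma^{-2}$ scaling of the NTK, followed by the Schur-complement bookkeeping ($\gamma^{-2}\cdot\gamma^{2}\cdot\gamma^{-2}=\gamma^{-2}$). The "assumption" you flag as needed for the LL step is exactly how the paper reads the hypothesis that non-scale-invariant parameters are fixed: it masks those coordinates out of both kernels via $\diag(\mathbf{1}_{\mathcal{P}})$, so $\mathcal{W}_{\gamma}$ acts uniformly as $\gamma$ on the remaining columns and your clean scaling law holds exactly.
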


Recently, \cite{antoran2021linearised, antoran2022adapting} observed similar pitfalls in Proposition \ref{prop:uncertainty-amplifying}. However, their solution requires a more complicated hyper-parameter search: \textit{independent prior selection for each normalized parameter group}. On the other hand, CL does not increase the hyper-parameter to be optimized compared to LL. We believe this difference will make CL more attractive to practitioners.
 
\section{Experiments}

Here we describe experiments that demonstrate (\romannumeral 1) the effectiveness of Connectivity Sharpness (CS) as a generalization measurement metric and (\romannumeral 2) the usefulness of Connectivity Laplace (CL) as an efficient general-purpose Bayesian NN: {CL resolves the contradiction of FM hypothesis and shows stable calibration performance to the selection of prior scale.}

\subsection{Connectivity Sharpness as a generalization measurement metric} \label{subsec:gen-metric}

To verify that the CS actually has a better correlation with generalization performance compared to existing sharpness measures, we evaluate the three correlation metrics on the CIFAR-10 dataset: (a) Kendall's rank-correlation coefficient ($\tau$) \citep{kendall1938new} (b) granulated Kendall's coefficient and their average ($\Psi$) \citep{jiang2020fantastic} (c) conditional independence test ($\mathcal{K}$) \citep{jiang2020fantastic}. {For all correlation metrics, a larger value means a stronger correlation between sharpness and generalization.}

We compare CS to following baseline sharpness measures: trace of Hessian ($\mathrm{tr}(\mathbf{H})$;  \citet{keskar2017large}), trace of empirical Fisher ($\mathrm{tr}(\mathbf{F})$; \citet{jastrzebski2021catastrophic}), trace of empirical NTK at $\theta^*$ ($\mathrm{tr}(\mathbf{\Theta^{\theta^*}})$), Fisher-Rao (FR ;\citet{liang2019fisher}) metric, Adaptive Sharpness (AS; \citet{kwon2021asam}), and four PAC-Bayes bound based measures: Sharpness-Orig. (SO), Pacbayes-Orig. (PO), Sharpness-Mag. (SM), and Pacbayes-Mag. (PM), which are eq. (52), (49), (62), (61) in \citet{jiang2020fantastic}. For computing granulated Kendall's correlation, we use 5 hyper-parameters (network depth, network width, learning rate, weight decay, and mini-batch size) and 3 options for each (thus we train models with $3^5=243$ different training configurations). We vary the depth and width of NN based on VGG-13 \citep{Simonyan15}. We refer to Appendix \ref{supp:exp-corr} for experimental details.

\begin{table}[t]
\centering 
\caption{\footnotesize Correlation analysis of sharpness measures with generalization gap. We refer Sec. \ref{subsec:gen-metric} for the details of sharpness measures (row) and correlation metrics for sharpness-generalization relationship (column).}
\label{table:sharpness}
\resizebox{0.8\textwidth}{!}{%
\begin{tabular}{@{}cccccccccc|c@{}}
\toprule
 & tr($\mathbf{H}$) & tr($\mathbf{F}$) & tr($\mathbf{\Theta}^{\theta^*}$) & SO & PO & SM & PM & AS & FR & CS \\ \midrule
$\tau$ (rank corr.) & 0.706 & 0.679 & 0.703 & 0.490 & 0.436 & 0.473 & 0.636 & 0.755 & 0.649 & \textbf{0.837} \\\midrule
network depth & 0.764 & 0.652 & \textbf{0.978} & -0.358 & -0.719 & 0.774 & 0.545 & 0.756 & 0.771 & \textbf{0.978} \\
network width & 0.687 & 0.922 & 0.330 & -0.533 & -0.575 & 0.495 & 0.564 & 0.827 & 0.921 & \textbf{0.978} \\
mini-batch size & 0.976 & 0.810 & \textbf{0.988} & 0.859 & 0.893 & 0.909 & 0.750 & 0.829 & 0.685 & 0.905 \\
learning rate & 0.966 & 0.713 & \textbf{1.000} & 0.829 & 0.874 & 0.057 & 0.621 & 0.794 & 0.565 & 0.897 \\
weight decay & -0.031 & -0.103 & 0.402 & 0.647 & 0.711 & 0.168 & 0.211 & 0.710 & 0.373 & \textbf{0.742} \\\midrule
$\Psi$ (avg.) & 0.672 & 0.599 & 0.739 & 0.289 & 0.237 & 0.481 & 0.538 & 0.783 & 0.663 & \textbf{0.900} \\\midrule
$\mathcal{K}$ (cond. MI) & 0.320 & 0.243 & 0.352 & 0.039 & 0.041 & 0.049 & 0.376 & 0.483 & 0.288 & \textbf{0.539} \\ \bottomrule
\end{tabular}%
} 
\end{table}

{In Table~\ref{table:sharpness}, CS shows the best results for $\tau$, $\Psi$, and $\mathcal{K}$ compared to all other sharpness measures. Also, granulated Kendall of CS is higher than other sharpness measures for 3 out of 5 hyper-parameters and competitive to other sharpness measures with the leftover hyper-parameters. The main difference of CS with other sharpness measures is in the correlation with \textit{network width} and \textit{weight decay}: For \textit{network width}, we found that sharpness measures except CS, $\mathrm{tr}(\mathbf{F})$, AS and FR fail to capture strong correlation. While SO/PO can capture the correlation with \textit{weight decay}, we believe it is due to the weight norm term of SO/PO. However, this term would interfere in capturing the sharpness-generalization correlation related to the number of parameters (i.e., width/depth), while CS/AS does not suffer from such a problem. Also, it is notable that FR fails to capture this correlation despite its invariant property.}

\subsection{Connectivity Laplace as an efficient general-purpose Bayesian NN}\label{subsec:exp-CTKGP}

To assess the effectiveness of CL as a general-purpose Bayesian NN, we consider uncertainty calibration on UCI dataset and CIFAR-10/100.

\begin{table}[t]
\centering
\caption{\footnotesize Test negative log-likelihood on two UCI variants~\citep{hernandez2015probabilistic, foong2019between}}
\label{tab:uci}
\resizebox{\textwidth}{!}{%
\begin{tabular}{@{}ccccc|cccc@{}}
\toprule
 & \multicolumn{4}{c|}{Original \citep{hernandez2015probabilistic} } & \multicolumn{4}{c}{GAP variants \citep{foong2019between} } \\ \midrule
 & Deep Ensemble & MCDO & LL & CL & Deep Ensemble & MCDO & LL & CL \\ \midrule
boston\_housing & 2.90 ± 0.03 & 2.63 ± 0.01 & \textbf{2.85 ± 0.01} & 2.88 ± 0.02 & 2.71 ± 0.01 & 2.68 ± 0.01 & \textbf{2.74 ± 0.01} & 2.75 ± 0.01 \\
concrete\_strength & 3.06 ± 0.01 & 3.20 ± 0.00 & 3.22 ± 0.01 & \textbf{3.11 ± 0.02} & 4.03 ± 0.07 & 3.42 ± 0.00 & \textbf{3.47 ± 0.01} & 4.03 ± 0.02 \\
energy\_efficiency & 0.74 ± 0.01 & 1.92 ± 0.01 & 2.12 ± 0.01 & \textbf{0.83 ± 0.01} & 0.77 ± 0.01 & 1.78 ± 0.01 & 2.02 ± 0.01 & 0.90 ± 0.02 \\
kin8nm & -1.07 ± 0.00 & -0.80 ± 0.01 & -0.90 ± 0.00 & \textbf{-1.07 ± 0.00} & -0.94 ± 0.00 & -0.71 ± 0.00 & -0.87 ± 0.00 & \textbf{-0.93 ± 0.00} \\
naval\_propulsion & -4.83 ± 0.00 & -3.85 ± 0.00 & -4.57 ± 0.00 & \textbf{-4.76 ± 0.00} & -2.22 ± 0.33 & -3.36 ± 0.01 & -3.66 ± 0.11 & \textbf{-3.80 ± 0.07} \\
power\_plant & 2.81 ± 0.00 & 2.91 ± 0.00 & 2.91 ± 0.00 & \textbf{2.81 ± 0.00} & 2.91 ± 0.00 & 2.97 ± 0.00 & 2.98 ± 0.00 & \textbf{2.91 ± 0.00} \\
protein\_structure & 2.89 ± 0.00 & 2.96 ± 0.00 & 2.91 ± 0.00 & \textbf{2.89 ± 0.00} & 3.11 ± 0.00 & 3.07 ± 0.00 & \textbf{3.07 ± 0.00} & 3.13 ± 0.00 \\
wine & 1.21 ± 0.00 & 0.96 ± 0.01 & \textbf{1.24 ± 0.01} & 1.27 ± 0.01 & 1.48 ± 0.01 & 1.03 ± 0.00 & 1.45 ± 0.01 & \textbf{1.43 ± 0.00} \\
yacht\_hydrodynamics & 1.26 ± 0.04 & 2.17 ± 0.06 & \textbf{1.20 ± 0.04} & 1.25 ± 0.04 & 1.71 ± 0.03 & 3.06 ± 0.02 & 1.78 ± 0.02 & \textbf{1.74 ± 0.01} \\ \bottomrule
\end{tabular}%
} 
\end{table}

\textbf{UCI regression datasets}\label{sec:uci} We implement full-curvature versions of LL and CL and evaluate these to the 9 UCI regression datasets~\citep{hernandez2015probabilistic} and its GAP-variants~\citep{foong2019between} to compare calibration performance on \textit{in-between} uncertainty. We train MLP with a single hidden layer. We fix $\sigma=1$ and choose $\alpha$ from \{0.01, 0.1, 1, 10, 100\} using log-likelihood of validation dataset. We use 8 random seeds to compute the average and standard error of the test negative log-likelihoods. The following two tables show test NLL for LL/CL and 2 baselines (Deep Ensemble \citep{lakshminarayanan2017deep} and Monte-Carlo DropOut (MCDO; \citet{gal2016dropout})). Eight ensemble members are used in Deep Ensemble, and 32 MC samples are used in LL, CL, and MCDO. Table~\ref{tab:uci} show that CL performs better than LL for 6 out of 9 datasets. Although LL shows better calibration results for 3 datasets in both settings, we would like to point out that the performance gaps between LL and CL were not severe as in the other 6 datasets, where CL performs better. 

\textbf{Image Classification} We evaluate the uncertainty calibration performance of CL on CIFAR-10/100. As baseline methods, we consider Deterministic network, Monte-Carlo Dropout (MCDO; \citep{gal2016dropout}), Monte-Carlo Batch Normalization (MCBN; \citep{teye2018bayesian}), and Deep Ensemble~\citep{lakshminarayanan2017deep}, Batch Ensemble \citep{wen2020batchensemble}, LL \citep{khan2019dnn2gp, daxberger2021laplace}. We use Randomize-Then-Optimize (RTO) implementation of LL/CL in Appendix \ref{sec:kfac-cl}. We measure Expected Calibration Error~(ECE; \citet{guo2017ece}), negative log-likelihood~(NLL), and Brier score~(Brier.) for ensemble predictions. We also measure the area under receiver operating curve~(AUC) for OOD detection, where we set the SVHN~\citep{netzer2011svhn} dataset as an OOD dataset. For more details on the experimental setting, please refer to Appendix~\ref{supp:exp-bnn}.

Table~\ref{table:resnet-18-cifar-100} shows uncertainty calibration results on CIFAR-100. We refer to Appendix~\ref{supp:exp-bnn} for results on other settings, including CIFAR-10 and VGGNet \citep{Simonyan15}. Our CL shows better results than baselines for all uncertainty calibration metrics~(NLL, ECE, Brier., and AUC) except Deep Ensemble. This means scale-invariance of CTK improves Bayesian inference, which is consistent with the results in toy examples. Although the Deep Ensemble presents the best results in 3 out of 4 metrics, it requires full training from initialization for each ensemble member, while LL/CL requires only a post-hoc training upon the pre-trained NN for each member. Particularly noteworthy is that CL presents competitive results with Deep Ensemble, even with much smaller computations.

\begin{table*}[t]
\caption{\footnotesize Uncertainty calibration results on CIFAR-100~\citep{krizhevsky2009learning} for ResNet-18~\citep{he2016deep}}\label{table:resnet-18-cifar-100}
\centering
\resizebox{\textwidth}{!}{%
\begin{tabular}{c|cccc}
\toprule
& \multicolumn{4}{c}{CIFAR-100}           \\ 
\midrule
&NLL~($\downarrow$)&ECE~($\downarrow$)&Brier.~($\downarrow$)&AUC~($\uparrow$) \\
\midrule
Deterministic& 1.5370 ± 0.0117 & 0.1115 ± 0.0017 & 0.3889 ± 0.0031 & -\\
MCDO & 1.4264 ± 0.0110 & 0.0651 ± 0.0008 & 0.3925 ± 0.0020 & 0.6907 ± 0.0121 \\
MCBN & 1.4689 ± 0.0106 & 0.0998 ± 0.0016 & 0.3750 ± 0.0028 & 0.7982 ± 0.0210 \\
Batch Ensemble & 1.4029 ± 0.0031 & 0.0842 ± 0.0005 & 0.3582 ± 0.0010 & 0.7887 ± 0.0115 \\
Deep Ensemble & 1.0110 & 0.0507 & 0.2740 & 0.7802 \\\midrule
Linearized Laplace & 1.1673 ± 0.0093 & 0.0632 ± 0.0010 & 0.3597 ± 0.0020 &  0.8066 ± 0.0120 \\
Connectivity Laplace (Ours) & 1.1307 ± 0.0042 & 0.0524 ± 0.0009 & 0.3319 ± 0.0005 & 0.8423 ± 0.0204 \\
\bottomrule 
\end{tabular}
}
\end{table*}

\begin{figure*}[!t]
    \begin{subfigure}[t]{0.33\linewidth}
        \includegraphics[width=\linewidth]{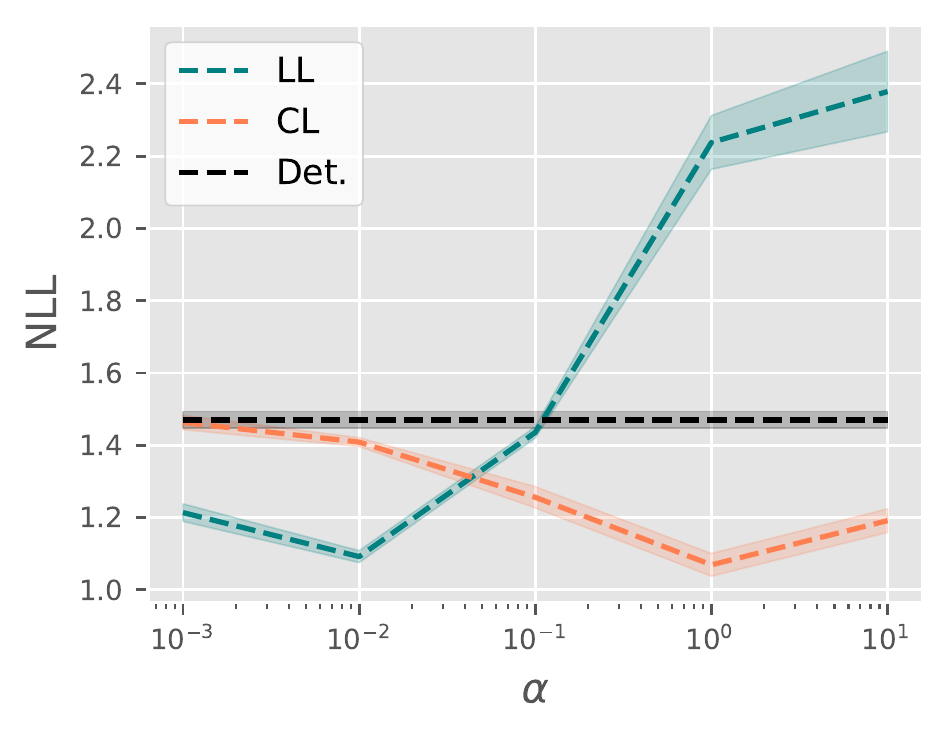}
        \captionsetup{justification=centering}
        \caption{NLL}
        \label{fig:step_10}
    \end{subfigure}
    \begin{subfigure}[t]{0.33\linewidth}
        \includegraphics[width=\linewidth]{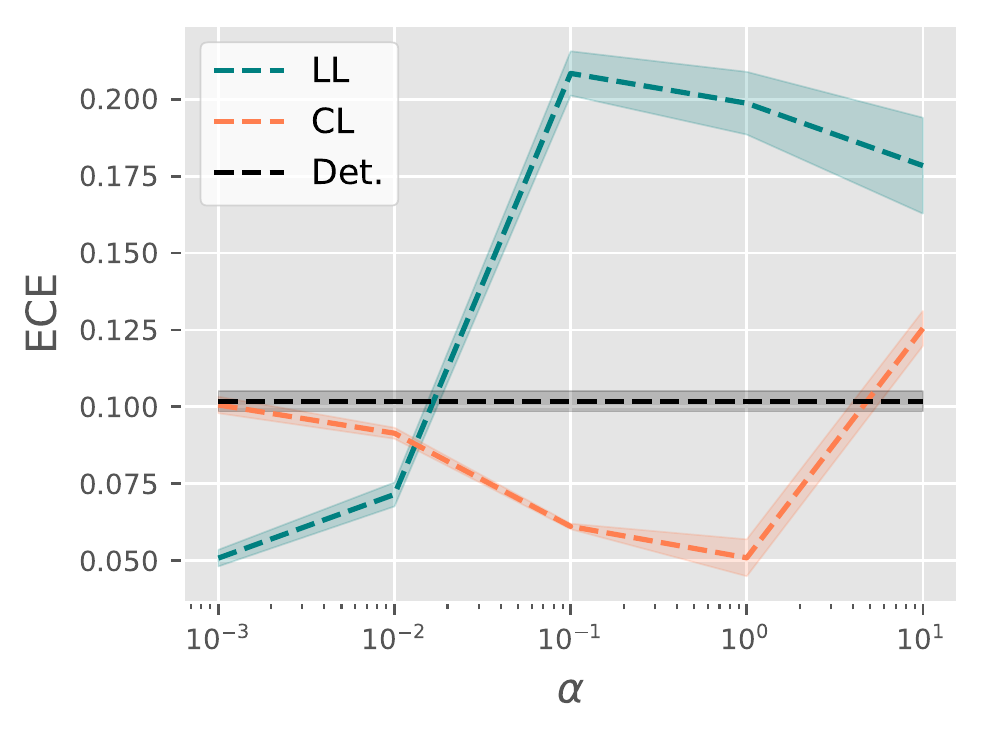}
        \captionsetup{justification=centering}
        \caption{ECE}
        \label{fig:step_0}
    \end{subfigure}
    \begin{subfigure}[t]{0.33\linewidth}
        \includegraphics[width=\linewidth]{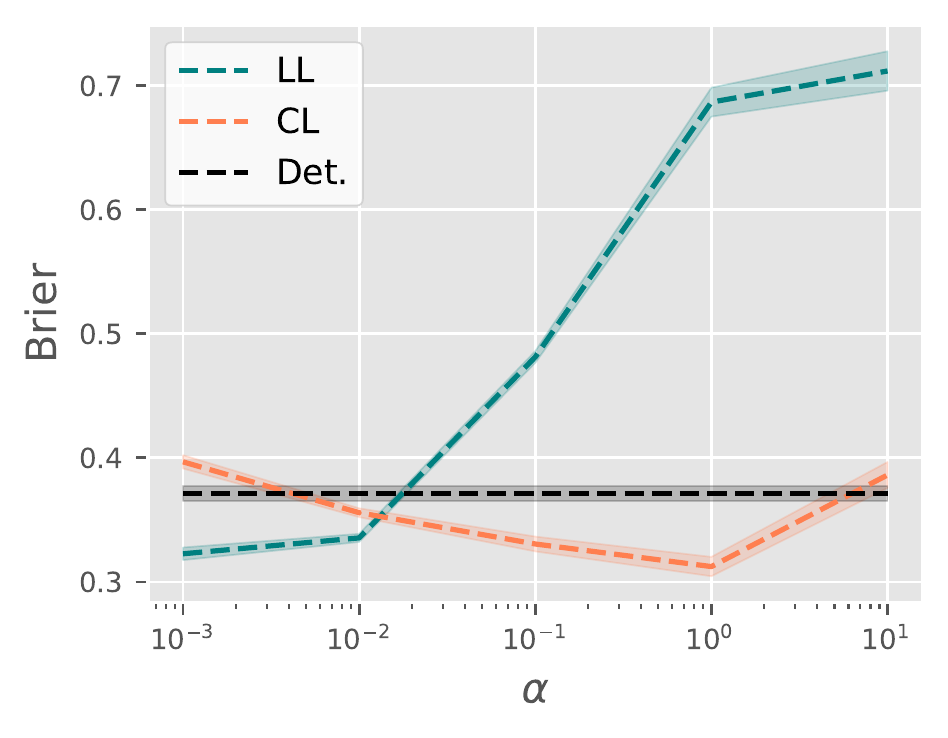}
        \captionsetup{justification=centering}
        \caption{Brier Score}
        \label{fig:step_100}
    \end{subfigure}
    \caption{\footnotesize Sensitivity to $\alpha$. Expected calibration error (ECE), Negative Log-likelihood (NLL), and Brier score results on corrupted CIFAR-100 for ResNet-18. Showing the mean (line) and standard deviation (shaded area) across four different seeds.}
    \label{fig:alpha}
    \vspace{-0.5cm}
\end{figure*}

\textbf{Robustness to the selection of prior scale}
{Figure~\ref{fig:alpha} shows the uncertainty calibration (i.e. NLL, ECE, Brier) results over various $\alpha$ values for LL, CL, and Deterministic (Det.) baseline. As mentioned in previous works \citep{ritter2018a, kristiadi2020being}, the uncertainty calibration results of LL is extremely sensitive to the selection of $\alpha$. Especially, LL shows severe under-fitting for large $\alpha$ (i.e. small damping) regime. On the other hand, CL shows stable performance in the various ranges of $\alpha$.}
    
\section{Conclusion}
This study introduced novel PAC-Bayes prior and posterior distributions to extend the robustness of generalization bound w.r.t. parameter transformation by \textit{decomposing} the scale and connectivity of parameters. The resulting generalization bound is guaranteed to be invariant of any function-preserving scale transformations. This result successfully solved the problem that the contradiction of the FM hypothesis caused by the general scale transformation could not be solved in the existing generalization error bound, thereby allowing the theory to be much closer to reality. As a result of the theoretical enhancement, our posterior distribution for PAC-analysis can also be interpreted as an improved Laplace Approximation without pathological failures in weight decay regularization. Therefore, we expect this fact contributes to reducing the theory-practice gap in understanding the generalization effect of NN, leading to follow-up studies that interpret this effect more clearly.

% \section*{Acknowledgements}

% In the unusual situation where you want a paper to appear in the
% references without citing it in the main text, use \nocite

%\bibliographystyle{iclr2023_conference}
%\bibliography{iclr203_conference.bib}

\clearpage 
\bibliographystyle{unsrtnat}
\bibliography{iclr2023_conference.bib}
\clearpage 

%%%%%%%%%%%%%%%%%%%%%%%%%%%%%%%%%%%%%%%%%%%%%%%%%%%%%%%%%%%%%%%%%%%%%%%%%%%%%%%
%%%%%%%%%%%%%%%%%%%%%%%%%%%%%%%%%%%%%%%%%%%%%%%%%%%%%%%%%%%%%%%%%%%%%%%%%%%%%%%
% APPENDIX
%%%%%%%%%%%%%%%%%%%%%%%%%%%%%%%%%%%%%%%%%%%%%%%%%%%%%%%%%%%%%%%%%%%%%%%%%%%%%%%
%%%%%%%%%%%%%%%%%%%%%%%%%%%%%%%%%%%%%%%%%%%%%%%%%%%%%%%%%%%%%%%%%%%%%%%%%%%%%%%

%%%%%%%%%%%%%%%%%%%%%%%%%%%%%%%%%%%%%%%%%%%%%%%%%%%%%%%%%%%%
%%%%%%%%%%%%%%%%%%%%%%%%%%%%%%%%%%%%%%%%%%%%%%%%%%%%%%%%%%%%
\newpage
\appendix

\section{Proofs}\label{sec:proofs}

\subsection{Proof of Proposition \ref{prop:distribution-invariance}}
    
\begin{proof}
Since the prior $\mathbb{P}_{\theta^*}(c)$ is independent to the parameter scale, $\mathbb{P}_{\theta^*}(c)\stackrel{d}{=}\mathbb{P}_{\mathcal{T}(\theta^*)}(c)$ is trivial.
For Jacobian w.r.t. parameters, we have
\begin{align*}
    \mathbf{J}_{\theta}(x, \mathcal{T}(\psi)) = \frac{\partial }{\partial \mathcal{T}(\psi)} f(x, \mathcal{T}(\psi)) = \frac{\partial }{\partial \mathcal{T}(\psi)} f(x, \psi) = \mathbf{J}_{\theta}(x, \psi) \mathcal{T}^{-1}
\end{align*}
Then, the Jacobian of NN w.r.t. connectivity at $\mathcal{T}(\psi)$ holds
\begin{align}
    \mathbf{J}_{\theta}(x, \mathcal{T}(\psi))\diag (\mathcal{T}(\psi))
    &= \mathbf{J}_{\theta}(x, \psi) \mathcal{T}^{-1} \mathcal{T} \diag (\psi)
    \\&= \mathbf{J}_{\theta}(x, \psi)\diag (\psi)\label{eq:jacobian-c-invariance}
\end{align}
where the first equality holds from the above one and the fact that $\mathcal{T}$ is a diagonal linear transformation. Therefore, the covariance of posterior is invariant to $\mathcal{T}$.
\begin{align*}
    &\left( \frac{\mathbf{I}_{P}}{\alpha^2} + \frac{\diag(\mathcal{T}(\theta^*))\mathbf{J}_{\theta}^\top(\mathcal{X},\mathcal{T}(\theta^*)) \mathbf{J}_{\theta}(\mathcal{X},\mathcal{T}(\theta^*)) \diag(\mathcal{T}(\theta^*))}{\sigma^2} \right)^{-1} 
    \\&= \left( \frac{\mathbf{I}_{P}}{\alpha^2} + \frac{\diag(\theta^*)\mathbf{J}_{\theta}^\top(\mathcal{X},\theta^*) \mathbf{J}_{\theta}(\mathcal{X},\theta^*) \diag(\theta^*)}{\sigma^2} \right)^{-1}
    \\&= \left( \frac{\mathbf{I}_{P}}{\alpha^2} + \frac{\diag(\theta^*)\mathbf{J}_{\theta}^\top \mathbf{J}_{\theta} \diag(\theta^*)}{\sigma^2} \right)^{-1}
\end{align*}
Moreover, the mean of posterior is also invariant to $\mathcal{T}$.
\begin{align*}
    &\frac{\Sigma_{\mathbb{Q}}\diag(\mathcal{T}(\theta^*))\mathbf{J}_{\theta}^{\top}(\mathcal{X},\mathcal{T}(\theta^*))\left( \mathcal{Y} - f(\mathcal{X},\mathcal{T}(\theta^*))\right)}{\sigma^2}
    \\&= \frac{\Sigma_{\mathbb{Q}}\diag(\mathcal{T}(\theta^*))\mathbf{J}_{\theta}^{\top}(\mathcal{X},\mathcal{T}(\theta^*))\left( \mathcal{Y} - f(\mathcal{X},\theta^*)\right)}{\sigma^2}
    \\&= \frac{\Sigma_{\mathbb{Q}}\diag(\theta^*)\mathbf{J}_{\theta}^{\top}(\mathcal{X},\theta^*)\left( \mathcal{Y} - f(\mathcal{X},\theta^*)\right)}{\sigma^2}
    \\&= \frac{\Sigma_{\mathbb{Q}}\diag(\theta^*)\mathbf{J}_{\theta}^{\top}\left( \mathcal{Y} - f(\mathcal{X},\theta^*)\right)}{\sigma^2}
\end{align*}
Therefore, \eqref{eq:ctkgp-c-posterior-mean} and \eqref{eq:ctkgp-c-posterior-cov} are invariant to function-preserving scale transformations. The remain part of the theorem is related to the definition of function-preserving scale transformation $\mathcal{T}$. For generalization error, following holds
\begin{align*}
    \err_{\mathcal{D}}(\mathbb{Q}_{\mathcal{T}(\theta^*)})
    &= \mathbb{E}_{(x,y) \sim \mathcal{D}, \psi \sim \mathbb{Q}_{\mathcal{T}(\theta^*)}} [\err(f(x,\psi), y) ]\\
    &= \mathbb{E}_{(x,y) \sim \mathcal{D}, c \sim \mathbb{Q}_{\mathcal{T}(\theta^*)}} [\err(g^\mathrm{pert}_{\theta^*}(x,c), y) ]\\
    &= \mathbb{E}_{(x,y) \sim \mathcal{D}, c \sim \mathbb{Q}_{\theta^*}} [\err(g^\mathrm{pert}_{\theta^*}(x,c), y) ]\\
    &= \mathbb{E}_{(x,y) \sim \mathcal{D}, \psi \sim \mathbb{Q}_{\theta^*}} [\err(f(x,\psi), y) ]\\
    &= \err_{\mathcal{D}}(\mathbb{Q}_{\theta^*})
\end{align*}
This proof can be extended to the empirical error $\err_{\mathcal{S}_\mathbb{Q}}$.
\end{proof}

\subsection{Proof of Theorem~\ref{thm:pac-bayes-ctk}}

\begin{proof}
\textbf{(Construction of KL divergence)} To construct PAC-Bayes-CTK, we need to arrange KL divergence between posterior and prior as follows:
\begin{align*}
    \mathrm{KL}[\mathbb{\mathbb{Q}}\| \mathbb{P}] 
    &=
    \frac{1}{2}\left(\mathrm{tr}\left(\Sigma_{\mathbb{P}}^{-1}(\Sigma_{\mathbb{Q}}+(\mu_{\mathbb{Q}}-\mu_{\mathbb{P}})(\mu_{\mathbb{Q}}-\mu_{\mathbb{P}})^{\top})\right)
    + \log|\Sigma_{\mathbb{P}}|
    -\log|\Sigma_{\mathbb{Q}}|- P\right) \\
    &= \frac{1}{2}\mathrm{tr}(\Sigma_{\mathbb{P}}^{-1}(\mu_{\mathbb{Q}}-\mu_{\mathbb{P}})(\mu_{\mathbb{Q}}-\mu_{\mathbb{P}})^{\top})) + \frac{1}{2}\left(\mathrm{tr}(\Sigma_{\mathbb{P}}^{-1}\Sigma_{\mathbb{Q}}) + \log|\Sigma_{\mathbb{P}}|
    -\log|\Sigma_{\mathbb{Q}}|- P\right)\\
    &= 
    \frac{1}{2}(\mu_{\mathbb{Q}}-\mu_{\mathbb{P}})^{\top}\Sigma_{\mathbb{P}}^{-1}(\mu_{\mathbb{Q}}-\mu_{\mathbb{P}})
    + \frac{1}{2}\left(\mathrm{tr}(\Sigma_{\mathbb{P}}^{-1}\Sigma_{\mathbb{Q}})
    -\log|\Sigma_{\mathbb{P}}^{-1}\Sigma_{\mathbb{Q}}|- P\right)\\
    &=\underbrace{\frac{\mu_{\mathbb{Q}}^{\top}\mu_{\mathbb{Q}}}{2\alpha^2}}_\textrm{perturbation}
    + \underbrace{\frac{1}{2}\left(\mathrm{tr}(\Sigma_{\mathbb{P}}^{-1}\Sigma_{\mathbb{Q}})-\log|\Sigma_{\mathbb{P}}^{-1}\Sigma_{\mathbb{Q}}|- p\right)}_\textrm{sharpness}
\end{align*}
where the first equality uses the KL divergence between two Gaussian distributions, the thrid equality uses trace property ($\mathrm{tr}(AB) = \mathrm{tr}(BA)$ and $\mathrm{tr}(a)=a$ for scalar $a$), and the last equality uses the definition of PAC-Bayes prior ($\mathbb{P}_{\theta^*}(c)=\mathcal{N}(c|\mathbf{0}_{P}, \alpha^2 \mathbf{I}_{P})$). For sharpness term, we first compute the $\Sigma_\mathbb{P}^{-1}\Sigma_\mathbb{Q}$ term as
\begin{align*}
    \Sigma_\mathbb{P}^{-1}\Sigma_\mathbb{Q} = \left( \mathbf{I}_{P} + \frac{\alpha^2}{\sigma^2}\mathbf{J}_{c}^\top \mathbf{J}_{c} \right)^{-1}
\end{align*}
Since $\alpha^2, \sigma^2 > 0$ and $\mathbf{J}_{c}^\top \mathbf{J}_{c}$ is positive semi-definite, the matrix $\Sigma_\mathbb{P}^{-1}\Sigma_\mathbb{Q}$ have non-zero eigenvalues of $\{ \beta_i \}_{i=1}^{P}$. Since trace is the sum of eigenvalues and log-determinant is the sum of log-eigenvalues, we have
\begin{align*}
    \mathrm{KL}[\mathbb{\mathbb{Q}}\| \mathbb{P}] 
    &= \frac{\mu_{\mathbb{Q}}^{\top}\mu_{\mathbb{Q}}}{2\alpha^2}
    + \frac{1}{2}\sum_{i=1}^{P} \left(\beta_i -\log (\beta_i) - 1 \right)
    \\&= \frac{\mu_{\mathbb{Q}}^{\top}\mu_{\mathbb{Q}}}{2\alpha^2}
    + \frac{1}{2}\sum_{i=1}^{P} h(\beta_i)
\end{align*}
where $h(x) = x - \log(x) -1$. By plugging this KL divergence to the \eqref{eq:pac-bayes-dep}, we get \eqref{eq:pac-bayes-connectivity}.

\textbf{(Eigenvalues of $\Sigma_\mathbb{P}^{-1}\Sigma_\mathbb{Q}$)} To show the scale-invariance of PAC-Bayes-CTK, it is sufficient to show that KL divergence posterior and prior is scale-invariant: $\frac{\log(2\sqrt{N_\mathbb{Q}}/\delta)}{2N_\mathbb{Q}}$ is independent to KL PAC-Bayes prior/posterior and we already show the invariance property of empirical/generalization error term in Proposition \ref{prop:distribution-invariance}. To show the invariance property of KL divergence, we consider the Connectivity Tangent Kernel (CTK) as defined in \eqref{coro:ctk-eigenvalues}:
\begin{align*}
    \mathbf{C}_\mathcal{X}^{\theta^*}:= \mathbf{J}_{c} \mathbf{J}_{c}^\top = \mathbf{J}_{\theta} \diag (\theta^*)^2 \mathbf{J}_{\theta}^\top  \in \mathbb{R}^{N K \times N K}.
\end{align*}
Since CTK is a real-symmetric matrix, one can assume the eigenvalue decomposition of CTK as $\mathbf{C}_\mathcal{X}^{\theta^*} = Q \Lambda Q^\top$ where $Q \in \mathbb{R}^{NK \times NK}$ is an orthogonal matrix and $\Lambda \in \mathbb{R}^{NK \times NK}$ is a diagonal matrix. Then following holds for $\Sigma_\mathbb{P}^{-1}\Sigma_\mathbb{Q}$
\begin{align*}
    \Sigma_\mathbb{P}^{-1}\Sigma_\mathbb{Q}
    &= \left( \mathbf{I}_{P} + \frac{\alpha^2}{\sigma^2}\mathbf{J}_{c}^\top \mathbf{J}_{c} \right)^{-1}
    \\&= \left( \mathbf{I}_{P} + \frac{\alpha^2}{\sigma^2}Q\Lambda Q^\top  \right)^{-1}
    \\&= Q \left(\mathbf{I}_{P} + \frac{\alpha^2}{\sigma^2}\Lambda \right)^{-1} Q^\top
\end{align*}
Therefore, eigenvalues of $\Sigma_\mathbb{P}^{-1}\Sigma_\mathbb{Q}$ are $\frac{1}{1+\alpha^2 \lambda_i/\sigma^2} = \frac{\sigma^2}{\sigma^2 + \alpha^2\lambda_i}$ where $\{\lambda_i\}_{i=1}^{P}$ are eigenvalues of CTK (and diagonal elements of $\Lambda$).

\textbf{(Scale invariance of CTK)} The scale-invariance property of CTK is a simple application of \eqref{eq:jacobian-c-invariance}:
\begin{align*}
    \mathbf{C}^{\mathcal{T}(\psi)}_{xy} 
    &= \mathbf{J}_{\theta}(x,\mathcal{T}(\psi))\mathrm{diag}(\mathcal{T}(\psi)^2) \mathbf{J}_{\theta}(y,\mathcal{T}(\psi))^{\top}
    \\&=\mathbf{J}_{\theta}(x,\psi)\mathcal{T}^{-1}\mathcal{T}\diag(\psi)\diag(\psi)\mathcal{T}\mathcal{T}^{-1}\mathbf{J}_{\theta}(x,\psi)^\top
    \\&=\mathbf{J}_{\theta}(x,\psi)\diag(\psi)\diag(\psi)\mathbf{J}_{\theta}(x,\psi)^\top
    \\&= \mathbf{C}^\psi_{xy} 
    \text{ , }\forall x,y\in \mathbb{R}^{D}, \forall \psi \in \mathbb{R}^{P}.
\end{align*}
Therefore, CTK is invariant to any function-preserving scale transformation $\mathcal{T}$ and so do its eigenvalues. This guarantees the invariance of $\Sigma_\mathbb{P}^{-1}\Sigma_\mathbb{Q}$ and its eigenvalues. In summary, we showed the scale-invariance property of sharpness term of KL divergence. Now all that remains is to show the invariance of the perturbation term. However, this is already proved in the proof of Proposition \ref{prop:distribution-invariance}. Therefore, we show PAC-Bayes-CTK is invariant to any function-preserving scale transformation.
\end{proof}

\subsection{Proof of Corollary \ref{coro:ctk-eigenvalues}}

\begin{proof}
In proof of Theorem \ref{thm:pac-bayes-ctk}, we showed that eigenvalues of $\Sigma_\mathbb{P}^{-1}\Sigma_\mathbb{Q}$ can be represented as 
\begin{align*}
    \frac{\sigma^2}{\sigma^2 + \alpha^2\lambda_i}
\end{align*}
where $\{\lambda_i\}_{i=1}^{P}$ are eigenvalues of CTK. Now, we identify the eigenvalues of CTK. To this end, we assume the singular value decomposition (SVD) of Jacobian w.r.t. connectivity $\mathbf{J}_{c} \in \mathbb{R}^{NK \times P}$ as $\mathbf{J}_{c} = U\Sigma V^\top$ where $U \in \mathbb{R}^{NK \times NK}$ and $V \in \mathbb{R}^{P \times P}$ are orthogonal matrices and $\Sigma \in \mathbb{R}^{NK \times P}$ is a rectangular diagonal matrix. Then, CTK can be represented as $\mathbf{C}_\mathcal{X}^{\theta^*} = \mathbf{J}_{c} \mathbf{J}_{c}^\top = U\Sigma V^\top V \Sigma U^\top = U \Sigma^2 U^\top$. In summary, the column vectors of $U$ are eigenvectors of CTK and eigenvalues of CTK are square of singular values of $\mathbf{J}_c$ and so $\lambda_i\geq 0,  \forall i$. Therefore $\beta_i \le 1$ for all $i=1,\dots,P$ for eigenvalues $\{\beta_i\}_{i=1}^{P}$ of $\Sigma_\mathbb{P}^{-1}\Sigma_\mathbb{Q}$ and equality holds for $\lambda_i=0$. Now all that remains is to show that the sharpness term of PAC-Bayes-CTK is a monotonically increasing function on each eigenvalues of CTK. To show this, we first keep in mind that
\begin{equation*}
    s(x) := \frac{\sigma^2}{\sigma^2 + \alpha^2 x}
\end{equation*}
is a monotonically decreasing function for $x \ge 0$ and $h(x) := x - \log(x) - 1$ is a monotonically decreasing function for $x \in (0,1]$. Since sharpness term of KL divergence is 
\begin{align*}
    \sum_{i=1}^{P}\frac{h(\beta_i)}{4N_\mathbb{Q}}
    = \sum_{i=1}^{P}\frac{(h\circ s)(\lambda_i)}{4N_\mathbb{Q}}
\end{align*}
This is a monotonically increasing function for $x \ge 0$ since $s(x) \le 1$ for $x \ge 0$. For your information, we plot $y= h(x)$ and $y= (h\circ s)(x)$ in Figure \ref{fig:functions}.

\begin{figure*}[!t]
    \begin{subfigure}[t]{0.45\linewidth}
        \includegraphics[width=\linewidth]{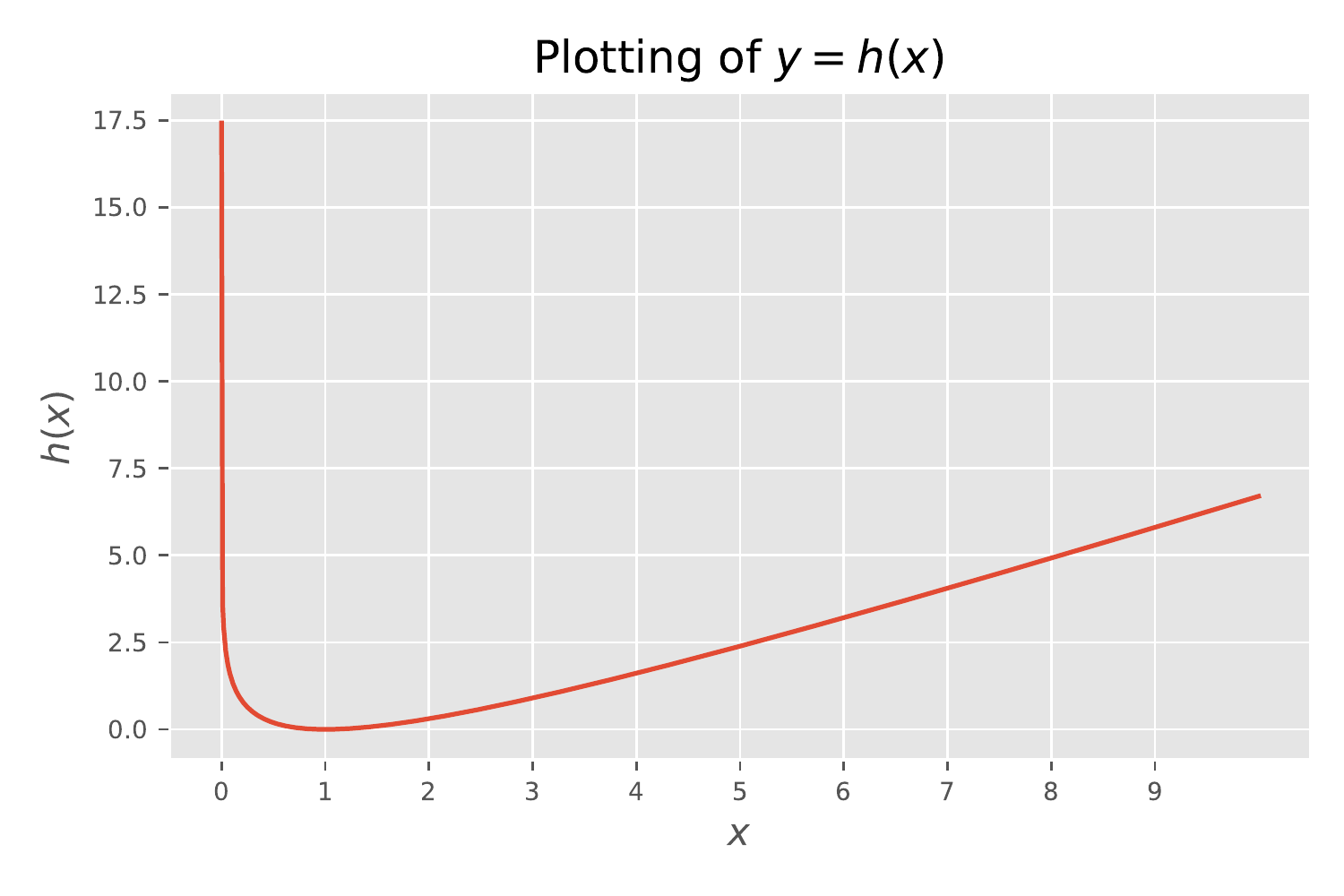}
        \captionsetup{justification=centering}
        \caption{$y= h(x)$}
    \end{subfigure}
    \begin{subfigure}[t]{0.45\linewidth}
        \includegraphics[width=\linewidth]{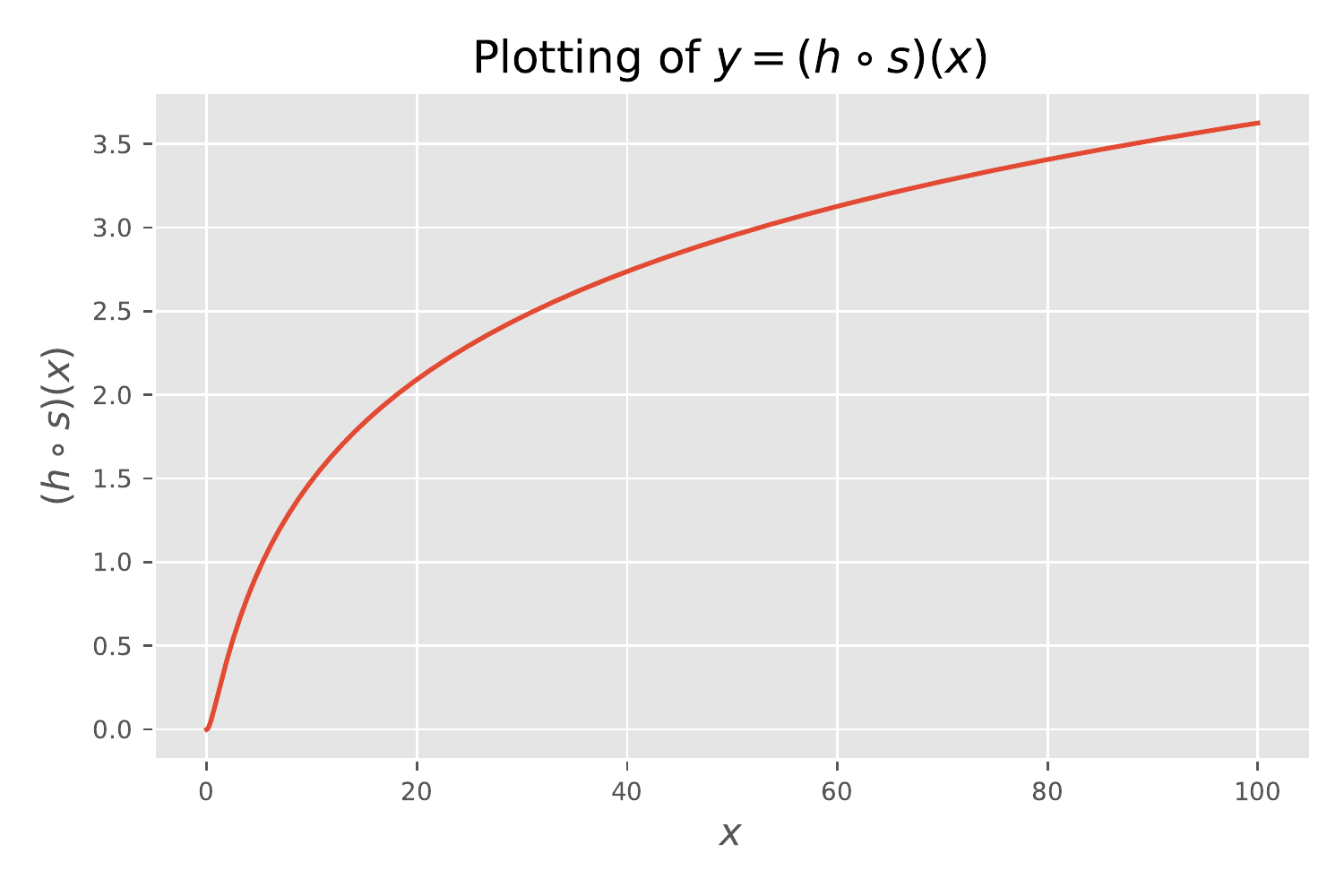}
        \captionsetup{justification=centering}
        \caption{$y= (h\circ s)(x)$ where $\sigma = \alpha = 1$}
    \end{subfigure}
    \caption{Functions used in proofs}
    \label{fig:functions}
\end{figure*}
\end{proof}

\subsection{Proof of Proposition \ref{prop:ctk-invariance}}

We refer to \textbf{Scale invariance of CTK} part of proof of Theorem \ref{thm:pac-bayes-ctk}. This is a direct application of scale-invariance property of Jacobian w.r.t. connectivity.

\subsection{Proof of Corollary \ref{coro:connectivity-sharpness}}

\begin{proof}
Since CS is trace of CTK, it is a sum of eigenvalues of CTK. As shown in the proof of Corollary \ref{coro:ctk-eigenvalues}, eigenvalues of CTK are square of singular values of Jacobian w.r.t. connectivity $c$. Therefore, eigenvalues of CTK are non-negative and vanishes to zero if CS vanishes to zero.
\begin{align*}
    \sum_{i=1}^{P}\lambda_i = 0 \Rightarrow \lambda_i = 0 \Rightarrow \beta_i = s(\lambda_i) = 1
    \Rightarrow h(\beta_i)=0,\quad\forall i =1,\dots,P
\end{align*}
This means the sharpness term of KL divergence vanishes to zero. Furthermore, singular values of Jacobian w.r.t. $c$ also vanishes to zero in this case. Therefore, $\mu_\mathbb{Q}$ vanishes to zero, also. Similarly, if CS diverges to infinity, this means (at least) one of eigenvalues of CTK diverges to infinity. In this case, following holds
\begin{align*}
    \lambda_i \rightarrow \infty \Rightarrow \beta_i = s(\lambda_i) \rightarrow 0
    \Rightarrow h(\beta_i) \rightarrow \infty,\quad\forall i =1,\dots,P
\end{align*}
Therefore, KL divergence term of PAC-Bayes-CTK also diverges to infinity.
\end{proof}

\subsection{Proof of Proposition \ref{prop:uncertainty-amplifying}}

\begin{proof}
By assumption, we fixed all non-scale invariant parameters. This means we exclude these parameters in sampling procedure of CL and LL. In terms of predictive distribution, this can be translated as
\begin{align*}
    f^\mathrm{lin}_{\theta^*}(x,\psi)&|p_\mathrm{LA}(\psi|\mathcal{S})
    \sim \mathcal{N}(f(x,\theta^*), \alpha^2 \hat{\Theta}_{xx}^{\theta^*} - \alpha^2 \hat{\Theta}_{x\mathcal{X}}^{\theta^*}\hat{\Theta}{\mathcal{X}}^{\theta^*-1}\hat{\Theta}_{\mathcal{X}x}^{\theta^*})\\
    f^\mathrm{lin}_{\theta^*}(x,\psi)&|p_\mathrm{CL}(\psi|\mathcal{S})
    \sim \mathcal{N}(f(x,\theta^*), \alpha^2 \hat{\mathbf{C}}_{xx}^{\theta^*} -\alpha^2  \hat{\mathbf{C}}_{x\mathcal{X}}^{\theta^*}\hat{\mathbf{C}}_{\mathcal{X}}^{\theta^*-1}\hat{\mathbf{C}}_{\mathcal{X}x}^{\theta^*})
\end{align*}
where $\hat{\Theta}_{xx'}^\psi := \sum_{i\in \mathcal{P}} \frac{\partial f(x, \psi)}{\partial \theta_i}\frac{\partial f(x', \psi)}{\partial \theta_i}$ and $\hat{\mathbf{C}}_{xx'}^\psi:= \sum_{i\in \mathcal{P}} \frac{\partial f(x, \psi)}{\partial \theta_i}\frac{\partial f(x', \psi)}{\partial \theta_i}(\psi_i)^2$ for scale-invariant parameter set $\mathcal{P}$. Thereby, we mask the gradient of non scale-invariant parameter as zero. Therefore, this can be arrange as follows
\begin{align*}
    \hat{\Theta}_{xx'}^\psi = \mathbf{J}_{\theta}(x,\psi) \diag(\mathbf{1}_\mathcal{P})\mathbf{J}_{\theta}(x,\psi)^\top, \quad
    \hat{\mathbf{C}}_{xx'}^\psi = \mathbf{J}_{\theta}(x,\psi) \diag(\psi)\diag(\mathbf{1}_\mathcal{P})\diag(\psi)\mathbf{J}_{\theta}(x,\psi)^\top
\end{align*}
where $\mathbf{1}_\mathcal{P} \in \mathbb{R}^{P}$ is a masking vector (i.e., one for included components and zero for excluded components).
Then, the weight decay regularization for scale-invariant parameters can be represented as 
\begin{align*}
    \mathcal{W}_\gamma (\psi)_{i} = 
    \begin{cases}
    \gamma \psi_i, &\text{if} \quad \psi_i \in \mathcal{P}.\\
    \psi_i, &\text{if} \quad \psi_i \not\in \mathcal{P}.\\
    \end{cases}
\end{align*}
Therefore, we get
\begin{align*}
    \hat{\Theta}_{xx'}^{\mathcal{W}_{\gamma}(\psi)} 
    &= \mathbf{J}_{\theta}(x,\mathcal{W}_{\gamma}(\psi)) \diag(\mathbf{1}_\mathcal{P})\mathbf{J}_{\theta}(x,\mathcal{W}_{\gamma}(\psi)))^\top
    \\&= \mathbf{J}_{\theta}(x,\psi) \mathcal{W}_{\gamma}^{-1}\diag(\mathbf{1}_\mathcal{P})\mathcal{W}_{\gamma}^{-1}\mathbf{J}_{\theta}(x,\psi)^\top
    \\&= \mathbf{J}_{\theta}(x,\psi) \diag(\mathbf{1}_\mathcal{P}/\gamma^2)\mathbf{J}_{\theta}(x,\psi)^\top
    \\&= 1/\gamma^2 \mathbf{J}_{\theta}(x,\psi) \diag(\mathbf{1}_\mathcal{P})\mathbf{J}_{\theta}(x,\psi)^\top
    \\&= 1/\gamma^2\hat{\Theta}_{xx'}^\psi
\end{align*}
for empirical NTK and 
\begin{align*}
    \hat{\mathbf{C}}_{xx'}^{\mathcal{W}_{\gamma}(\psi)}
    &= \mathbf{J}_{\theta}(x,\mathcal{W}_{\gamma}(\psi)) \diag(\mathcal{W}_{\gamma}(\psi))\diag(\mathbf{1}_\mathcal{P})\diag(\mathcal{W}_{\gamma}(\psi))\mathbf{J}_{\theta}(x,\mathcal{W}_{\gamma}(\psi)))^\top
    \\&= \mathbf{J}_{\theta}(x,\psi) \mathcal{W}_{\gamma}^{-1}\mathcal{W}_{\gamma}\diag(\psi)\diag(\mathbf{1}_\mathcal{P})\diag(\psi)\mathcal{W}_{\gamma}\mathcal{W}_{\gamma}^{-1}\mathbf{J}_{\theta}(x,\psi)^\top
    \\&= \mathbf{J}_{\theta}(x,\psi) \diag(\psi)\diag(\mathbf{1}_\mathcal{P})\diag(\psi)\mathbf{J}_{\theta}(x,\psi)^\top
    \\&=\hat{\mathbf{C}}_{xx'}^\psi
\end{align*}
for empirical CTK. Therefore, we get
\begin{align*}
    f^\mathrm{lin}_{\mathcal{W}_{\gamma}(\theta^*)}(x,\psi)&|p_\mathrm{LA}(\psi|\mathcal{S})
    \sim \mathcal{N}(f(x,\theta^*), \alpha^2 / \gamma^2 \hat{\Theta}_{xx}^{\theta^*} - \alpha^2 / \gamma^2 \hat{\Theta}_{x\mathcal{X}}^{\theta^*}\hat{\Theta}_{\mathcal{X}}^{\theta^*-1}\hat{\Theta}_{\mathcal{X}x}^{\theta^*})\\
    f^\mathrm{lin}_{\mathcal{W}_{\gamma}(\theta^*)}(x,\psi)&|p_\mathrm{CL}(\psi|\mathcal{S})
    \sim \mathcal{N}(f(x,\theta^*), \alpha^2 \hat{\mathbf{C}}_{xx}^{\theta^*} - \alpha^2 \hat{\mathbf{C}}_{x\mathcal{X}}^{\theta^*}\hat{\mathbf{C}}_{\mathcal{X}}^{\theta^*-1}\hat{\mathbf{C}}_{\mathcal{X}x}^{\theta^*})
\end{align*}
This gives us 
\begin{align*}
    &\textrm{Var}_{\psi \sim p_\mathrm{LA}(\psi|\mathcal{S})}(f^\mathrm{lin}_{\mathcal{W}_{\gamma}(\theta^*)}(x,\psi)) = \textrm{Var}_{\psi \sim p_\mathrm{LA}(\psi|\mathcal{S})}(f^\mathrm{lin}_{\theta^*}(x,\psi))/\gamma^2\\
    &\textrm{Var}_{\psi \sim p_\mathrm{CL}(\psi|\mathcal{S})}(f^\mathrm{lin}_{\mathcal{W}_{\gamma}(\theta^*)}(x,\psi)) = \textrm{Var}_{\psi \sim p_\mathrm{CL}(\psi|\mathcal{S})}(f^\mathrm{lin}_{\theta^*}(x,\psi))
\end{align*}
\end{proof}

\subsection{Derivation of PAC-Bayes-NTK}

\begin{theorem}[PAC-Bayes-NTK] \label{thm:pac-bayes-ntk}
    Let us assume pre-trained parameter $\theta^*$ with data $\mathcal{S}_\mathbb{P}$. Let us assume PAC-Bayes prior and posterior as
    \begin{align}
        \mathbb{P'}_{\theta*}(\delta) &:= \mathcal{N}(\delta|\mathbf{0}_{P}, \alpha^2\mathbf{I}_{P})\label{eq:ntkgp-w-prior}\\
        \mathbb{Q'}_{\theta^*}(\delta) &:= \mathcal{N}(\delta| \mu_{\mathbb{Q'}}, \Sigma_{\mathbb{Q'}})\label{eq:ntkgp-w-posterior}\\
        \mu_{\mathbb{Q'}} &:= \frac{\Sigma_{\mathbb{Q'}}\mathbf{J}_{\theta}^\top \left( \mathcal{Y} - f(\mathcal{X},\theta^*)\right)}{\sigma^2}\label{eq:ntkgp-w-posterior-mean}\\
        \Sigma_{\mathbb{Q'}} &:= \left( \frac{\mathbf{I}_{P}}{\alpha^2} + \frac{\mathbf{J}_{
        \theta}^\top \mathbf{J}_{\theta}}{\sigma^2} \right)^{-1}\label{eq:ntkgp-w-posterior-cov}
    \end{align}
    
    By applying $\mathbb{P'}_{\theta^*}, \mathbb{Q'}_{\theta^*}$ to data-dependent PAC-Bayes bound (\eqref{eq:pac-bayes-dep}), we get
    \begin{align}\label{eq:pac-bayes-ntk}
        \err_{\mathcal{D}}(\mathbb{Q'}_{\theta^*}) 
        &\le \err_{\mathcal{S}_\mathbb{Q'}}(\mathbb{Q'}_{\theta^*}) + \sqrt{
        \overbrace{\underbrace{\frac{\mu_{\mathbb{Q'}}^{\top}\mu_{\mathbb{Q'}}}{4\alpha^2 N_\mathbb{Q'}}}_\textrm{(average)   perturbation} + \underbrace{\sum_{i=1}^{P}\frac{h\left(\beta'_i \right)}{{4N_\mathbb{Q'}}}}_\textrm{sharpness}}^\textrm{KL divergence} + \frac{\log(2\sqrt{N_\mathbb{Q'}}/\delta)}{2N_\mathbb{Q'}}}
    \end{align}
    where $\{\beta'_i\}_{i=1}^{P}$ are eigenvalues of $(\mathbf{I}_{P} + \frac{\alpha^2}{\sigma^2}\mathbf{J}_{\theta}^\top \mathbf{J}_{\theta})^{-1}$ and $h(x) := x-\log(x)-1$. This upper bound is not scale-invariant in general. 
\end{theorem}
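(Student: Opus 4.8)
The plan is to mirror the proof of Theorem~\ref{thm:pac-bayes-ctk} almost verbatim, replacing the Jacobian with respect to connectivity $\mathbf{J}_{c}$ by the Jacobian with respect to parameters $\mathbf{J}_{\theta}$ throughout, and simply dropping the scale-invariance argument (which, for NTK, is false). Since the prior $\mathbb{P'}_{\theta^*}$ and posterior $\mathbb{Q'}_{\theta^*}$ are prescribed directly as Gaussians in \eqref{eq:ntkgp-w-prior}--\eqref{eq:ntkgp-w-posterior-cov}, no derivation of the posterior is required; the entire content is to compute $\mathrm{KL}[\mathbb{Q'}_{\theta^*}\|\mathbb{P'}_{\theta^*}]$ and substitute it into the data-dependent bound \eqref{eq:pac-bayes-dep}.

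First I would write down the closed form of the KL divergence between two Gaussians, specialized to $\mu_{\mathbb{P'}} = \mathbf{0}_{P}$ and $\Sigma_{\mathbb{P'}} = \alpha^2 \mathbf{I}_{P}$. Because $\Sigma_{\mathbb{P'}}^{-1} = \mathbf{I}_{P}/\alpha^2$, the Mahalanobis term collapses to $\mu_{\mathbb{Q'}}^{\top}\mu_{\mathbb{Q'}}/\alpha^2$, producing the perturbation contribution $\mu_{\mathbb{Q'}}^{\top}\mu_{\mathbb{Q'}}/(2\alpha^2)$, and the remaining trace-and-log-determinant part groups, exactly as in the CTK proof, into $\tfrac{1}{2}\big(\mathrm{tr}(\Sigma_{\mathbb{P'}}^{-1}\Sigma_{\mathbb{Q'}}) - \log|\Sigma_{\mathbb{P'}}^{-1}\Sigma_{\mathbb{Q'}}| - P\big)$. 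Next I would evaluate $\Sigma_{\mathbb{P'}}^{-1}\Sigma_{\mathbb{Q'}} = \tfrac{1}{\alpha^2}\big(\tfrac{\mathbf{I}_{P}}{\alpha^2} + \tfrac{\mathbf{J}_{\theta}^{\top}\mathbf{J}_{\theta}}{\sigma^2}\big)^{-1} = \big(\mathbf{I}_{P} + \tfrac{\alpha^2}{\sigma^2}\mathbf{J}_{\theta}^{\top}\mathbf{J}_{\theta}\big)^{-1}$, which is precisely the matrix whose eigenvalues are the $\{\beta'_i\}$ in the statement. Since the trace is the sum of eigenvalues and the log-determinant the sum of their logarithms, the sharpness part becomes $\tfrac{1}{2}\sum_{i=1}^{P}(\beta'_i - \log\beta'_i - 1) = \tfrac{1}{2}\sum_{i=1}^{P} h(\beta'_i)$.

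Assembling these gives $\mathrm{KL}[\mathbb{Q'}_{\theta^*}\|\mathbb{P'}_{\theta^*}] = \mu_{\mathbb{Q'}}^{\top}\mu_{\mathbb{Q'}}/(2\alpha^2) + \tfrac{1}{2}\sum_{i} h(\beta'_i)$. Substituting into \eqref{eq:pac-bayes-dep} with $N_{\mathbb{Q'}}$ in place of $N_{\mathbb{Q}}$ and carrying the two factors of $\tfrac{1}{2}$ through the $1/(2N_{\mathbb{Q'}})$ normalization yields exactly \eqref{eq:pac-bayes-ntk}. For the final non-invariance claim I would invoke the parameter-Jacobian transformation rule $\mathbf{J}_{\theta}(x,\mathcal{T}(\psi)) = \mathbf{J}_{\theta}(x,\psi)\mathcal{T}^{-1}$ established in the proof of Proposition~\ref{prop:distribution-invariance}: under $\mathcal{T}$ the curvature $\mathbf{J}_{\theta}^{\top}\mathbf{J}_{\theta}$ maps to $\mathcal{T}^{-\top}\mathbf{J}_{\theta}^{\top}\mathbf{J}_{\theta}\mathcal{T}^{-1}$, and the $\mathcal{T}$ factors do \emph{not} cancel as they do for $\mathbf{J}_{c}$, so the $\{\beta'_i\}$ and hence the bound change in general.

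I do not expect a genuine obstacle, as the argument is a mechanical specialization of the CTK computation; the scale-invariance step that was the substantive part of Theorem~\ref{thm:pac-bayes-ctk} is deliberately absent here. The only points requiring care are bookkeeping: confirming that $\Sigma_{\mathbb{Q'}}$ is positive definite (guaranteed by the $\mathbf{I}_{P}/\alpha^2$ damping, so each $\beta'_i \in (0,1]$ and $h(\beta'_i) \ge 0$, making the square-root argument well-defined), and verifying that the factor-of-two accounting between the KL expression and the $1/(4N_{\mathbb{Q'}})$ coefficients in \eqref{eq:pac-bayes-ntk} is consistent.
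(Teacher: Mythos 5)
Your proposal is correct and follows essentially the same route as the paper: the paper's own proof of Theorem~\ref{thm:pac-bayes-ntk} simply reuses the KL-divergence construction from Theorem~\ref{thm:pac-bayes-ctk} with $\mathbf{J}_{c}$ replaced by $\mathbf{J}_{\theta}$, arriving at $\Sigma_{\mathbb{P'}}^{-1}\Sigma_{\mathbb{Q'}} = \left(\mathbf{I}_{P} + \tfrac{\alpha^2}{\sigma^2}\mathbf{J}_{\theta}^{\top}\mathbf{J}_{\theta}\right)^{-1}$ and the same perturbation-plus-sharpness decomposition, and it justifies non-invariance via the same rule $\mathbf{J}_{\theta}(x,\mathcal{T}(\psi)) = \mathbf{J}_{\theta}(x,\psi)\mathcal{T}^{-1}$, under which the $\mathcal{T}$ factors fail to cancel. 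Your factor-of-two accounting and the observation that $\beta'_i \in (0,1]$ are also consistent with the paper.
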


\begin{proof}
    The main difference between PAC-Bayes-CTK and PAC-Bayes-NTK is the definition of Jacobian: PAC-Bayes-CTK use Jacobian w.r.t connectivity and PAC-Bayes-NTK use Jacobian w.r.t. parameter. Therefore, \textbf{Construction of KL divergence} of proof of Theorem \ref{thm:pac-bayes-ctk} is preserved except 
    \begin{align*}
        \Sigma_{\mathbb{P'}}^{-1}\Sigma_{\mathbb{Q'}}
        = (\mathbf{I}_{P} + \frac{\alpha^2}{\sigma^2}\mathbf{J}_{\theta}^\top \mathbf{J}_{\theta})^{-1}
    \end{align*}
    and $\beta'_i$ are eigenvalues of $\Sigma_{\mathbb{P'}}^{-1}\Sigma_{\mathbb{Q'}}$. Note that these eigenvalues satisfies
    \begin{align*}
        \beta'_i = \frac{\sigma^2}{\sigma^2 + \alpha^2 \lambda'_i}
    \end{align*}
    where $\lambda'_i$ are eigenvalues of NTK.
\end{proof}

\begin{remark}[Function-preserving scale transformation to NTK]
 On the contrary to the CTK, scale invariance property is not applicable to the NTK due to Jacobian w.r.t. parameter:
\begin{align*}
    \mathbf{J}_{\theta}(x, \mathcal{T}(\psi)) = \frac{\partial }{\partial \mathcal{T}(\psi)} f(x, \mathcal{T}(\psi)) = \frac{\partial }{\partial \mathcal{T}(\psi)} f(x, \psi) = \mathbf{J}_{\theta}(x, \psi) \mathcal{T}^{-1}
\end{align*}
If we assume all parameters are scale-invariant (or equivalently masking the Jacobian for all non scale-invariant parameters as in the proof of Proposition \ref{prop:uncertainty-amplifying}), the scale of NTK is proportional to the inverse scale of parameters. 
\end{remark}

\subsection{Deterministic limiting kernel of CTK}\label{proof:deterministic-ctk}

\begin{theorem}[Deterministic limiting kernel of CTK]
Let us assume $L$-layered network with Lipschitz activation function and NN with NTK initialization. Then the empirical CTK converges in probability to a deterministic limiting kernel $\mathbf{C}_{xy}$ as the layers width $n_{1}, \dots, n_{L} \rightarrow \infty$ sequentially. Furthermore, $\mathbf{C}_{xy}= \Theta_{xy}$ holds.
\end{theorem}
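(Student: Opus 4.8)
The plan is to exploit the fact that the empirical CTK and the empirical NTK differ only by the diagonal reweighting $\diag(\theta^*)^2$ inserted between the two Jacobians: $\mathbf{C}^{\theta^*}_{xy} = \mathbf{J}_{\theta}(x,\theta^*)\diag(\theta^*)^2\mathbf{J}_{\theta}(y,\theta^*)^\top$ versus $\Theta^{\theta^*}_{xy} = \mathbf{J}_{\theta}(x,\theta^*)\mathbf{J}_{\theta}(y,\theta^*)^\top$. Since the convergence in probability of $\Theta^{\theta^*}_{xy}$ to the deterministic NTK $\Theta_{xy}$ under the sequential width limit is exactly the theorem of \citet{jacot2018ntk} (valid under the regularity on $\phi$ and the NTK initialization assumed there), it suffices to show that inserting $\diag(\theta^*)^2$ has a vanishing effect, i.e. $\mathbf{C}^{\theta^*}_{xy} - \Theta^{\theta^*}_{xy} \to 0$ in probability. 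This would give both claims simultaneously: convergence to a deterministic kernel and equality with $\Theta_{xy}$.

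Writing the difference coordinatewise, $\mathbf{C}^{\theta^*}_{xy} - \Theta^{\theta^*}_{xy} = \sum_{p=1}^{P} \partial_{\theta_p}f(x,\theta^*)\,\partial_{\theta_p}f(y,\theta^*)\,\big((\theta^*_p)^2 - 1\big)$, where under NTK initialization the $\theta^*_p$ are i.i.d. $\mathcal{N}(0,1)$, so the multiplicative weights $(\theta^*_p)^2 - 1$ are i.i.d., mean zero, with finite variance. I would organize the sum by layers, using the standard decomposition of each gradient into a backpropagated factor $\delta^{(l)}_i$ and a forward activation $a^{(l-1)}_j$, so that each layer's contribution to both kernels is a product of a forward covariance and a backward covariance (the recursion underlying Jacot's argument). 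The target is then to show, layer by layer, that reweighting each weight term $W^{(l)}_{ij}$ by its squared initial value leaves the contribution unchanged in the limit, because the empirical average of the mean-one weights $(\theta^*_p)^2$ concentrates at its expectation $1$.

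Concretely, I would condition on all randomness except the layer-$l$ parameters and estimate the conditional mean and variance of that layer's difference term. If the gradient products were independent of $(\theta^*_p)^2$, each term would be mean zero and a law-of-large-numbers argument over the $n_{l-1}n_l$ coordinates, using the NTK-scaling normalizations $1/\sqrt{n_{l-1}}$ in the forward derivative and $\delta^{(l)}_i = O(1/\sqrt{n_l})$ in the backward factor, would force both the conditional mean and the variance to $0$. The genuine difficulty is that $\partial_{\theta_p}f$ is \emph{not} independent of $\theta^*_p$: the gradient depends on $\theta^*_p$ through the pre-activation $h^{(l)}_i$ entering $\phi'(h^{(l)}_i)$. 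Controlling this self-coupling is the main obstacle. I would handle it by noting that a single weight $W^{(l)}_{ij}$ enters $h^{(l)}_i$ only through the term $W^{(l)}_{ij}a^{(l-1)}_j/\sqrt{n_{l-1}} = O(1/\sqrt{n_{l-1}})$; a Taylor expansion of $\phi'$ together with the Gaussian moment identities $\mathbb{E}[W^2-1]=\mathbb{E}[W^3-W]=0$ shows the induced bias per term is only $O(1/n_{l-1})$, summing to $O(1/n_{l-1})\to 0$, while the variance is $O(1/(n_{l-1}n_l))\to 0$.

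Putting the pieces together, each layer's CTK contribution converges in probability to the corresponding NTK contribution, so summing over the finitely many layers and taking the sequential limit $n_1,\dots,n_L\to\infty$ yields $\mathbf{C}^{\theta^*}_{xy}\to\Theta_{xy}$ in probability, establishing $\mathbf{C}_{xy}=\Theta_{xy}$. I expect the bias-and-variance bookkeeping for the self-coupling term to be the only technically delicate step; the remainder is a reduction to \citet{jacot2018ntk} together with a concentration argument for the mean-one multiplicative weights introduced by $\diag(\theta^*)^2$.
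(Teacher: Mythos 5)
Your proposal takes essentially the same route as the paper's proof: both reduce the claim to the NTK convergence result of Jacot et al.\ and observe that the squared standard-Gaussian weights inserted by $\diag(\theta^*)^2$ have mean one and average out, layer by layer, in the sequential width limit. If anything, your treatment is more careful than the paper's one-paragraph induction, which simply asserts independence between the squared-weight factors and the gradient products, whereas you explicitly identify and propose to control the self-coupling of $W^{(l)}_{ij}$ with the gradient through $\phi'(h^{(l)}_i)$.
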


\begin{proof}
    The proof is a modification to proof of convergence of NTK in \citet{jacot2018ntk} considering NTK initialization~(i.e. standard Gaussian for all parameters). We provide proof by induction. For single layer network, The CTK is summed as:
    \begin{align*}
        (\mathbf{C}_{xx'})_{kk'} 
        &= \frac{1}{n_0} \sum_{i=1}^{n_0}\sum_{j=1}^{n_1} x_i x_i^{'} \delta_{jk}\delta_{jk'}W_{ik}W_{ik'}
        + \beta^2 \sum_{j=1}^{n_1}\delta_{jk}\delta_{jk'}\\
        &\rightarrow (\Theta_{xx})_{kk'}
    \end{align*}
    since the weight is sampled from standard Gaussian distribution, whose variance is 1, and product of two (independent) random variable converges in probability converges to the product of converged values. If we assume CTK of $l$-th layer is converged to NTK of $l$-th layer in probability, then the convergence of the $(l+1)$-th layer is also  satisfied since multiplication of two random weights, which converges to 1, is multiplied to the empirical NTK of $(l+1)$-th layer, which converges to the deterministic limiting NTK of $(l+1)$-th layer. Therefore, empirical CTK converges in probability to the deterministic limiting CTK, which is equivalent to the deterministic limiting NTK.
\end{proof}

\section{Details of Squared Loss for Classification Tasks}\label{sec:classification}
For the classification tasks in Sec.~\ref{subsec:exp-CTKGP}, we use the squared loss instead of the cross-entropy loss since our theoretical results are built on the squared loss. Here, we describe how we use the squared loss to mimic the cross-entropy loss. There are several works~\citep{lee2020finite, hui2021evaluation} that utilized the squared loss for the classification task instead of the cross-entropy loss. Specifically, \citet{lee2020finite} used 
\begin{align*}
    \mathcal{L}(\mathcal{S},\theta)=\frac{1}{2NK}\sum_{(x_i,y_i)\in\mathcal{S}}\|f(x_i,\theta)-y_i\|^2
\end{align*}
where $C$ is the number of classes, and \citet{hui2021evaluation} used
\begin{align*}
    \ell((x,c), \theta) =\frac{1}{2K}\left(k(f_c(x,\theta)-M)^2+\sum_{i=1,i\neq c}^K f_i(x,\theta)^2\right)
\end{align*}
for single data loss, where $\ell((x,c),\theta)$ is sample loss given input $x$, target $c$ and parameter $\theta$, $f_i(x,\theta)\in\mathbb{R}$ is the $i$-th component of $f(x,\theta)\in\mathbb{R}^K$, $k$ and $M$ are dataset-specific hyper-parameters.

These works used the mean for reducing the vector-valued loss into a scalar loss. However, this can be problematic when the number of classes is large. When the number of classes increases, the denominator of the mean (the number of classes) increases while the target value remains 1~(one-hot label). As a result, the scale of a gradient for the target class becomes smaller. To avoid such an unfavorable effect, we just use the sum for reducing vector-valued loss into a scalar loss instead of taking the mean, i.e.,
\begin{align*}
    \ell((x,c), \theta) = \frac{1}{2} \left( (f_c(x,\theta)-1)^2 + \sum_{i=1, i\neq c}^K f_i(x,\theta)^2 \right)
\end{align*}

This analysis is consistent with the hyper-parameter selection in~\citet{hui2021evaluation}. They used larger $k$ and $M$ as the number of classes increases (e.g., $k=1, M=1$ for CIFAR-10~\citep{krizhevsky2009learning}, but $k=15, M=30$ for ImageNet~\citep{deng2009imagenet}) which results in manually compensating the scale of gradient to the target class label.

\section{Derivation of PAC-Bayes posterior}\label{sec:pac-bayes-posterior-derivation}

\textbf{Derivation of $\mathbb{Q}_{\theta^*}(c)$}

For Bayesian linear regression, we compute the posterior of $\beta \in \mathbb{R}^{P}$
\begin{align*}
    y_i = x_i \beta + \epsilon_i,\quad\text{for }i=1\dots,M
\end{align*}
where $\epsilon_i \sim \mathcal{N}(0, \sigma^2)$ is i.i.d. sampled and the prior of $\beta$ is given as $\beta \sim \mathcal{N}(\mathbf{0}_{P}, \alpha^2 \mathbf{I}_{P})$. By concatenating this, we get
\begin{align*}
    \mathbf{y} = \mathbf{X}\beta + \varepsilon
\end{align*}
where $\mathbf{y} \in \mathbb{R}^{M}, \mathbf{X} \in \mathbb{R}^{M \times p}$ are concatenation of $y_i, x_i$, respectively and $\varepsilon \sim \mathcal{N}(\mathbf{0}_{M}, \sigma^2 \mathbf{I}_{M})$. It is well known \citep{bishop2006pattern, murphy2012machine} that the posterior of $\beta$ for this problem is 
\begin{align*}
    \beta &\sim \mathcal{N}\left(\beta| \mu, \Sigma \right)\\
    \mu &:= \frac{\Sigma \mathbf{X}^\top y}{\sigma^2} \\
    \Sigma &:= \left( \frac{\mathbf{I}_{P}}{\alpha^2} + \frac{\mathbf{X}^\top \mathbf{X}}{\sigma^2} \right)^{-1}.
\end{align*}
Similarly, we define Bayesian linear regression problem as
\begin{align*}
    y_i = f(x_i, \theta^*) + \mathbf{J}_{\theta}(x_i,\theta^*)\diag(\theta^*) c + \epsilon_i,\quad\text{for }i=1\dots,NK
\end{align*}
where $M = NK$ and the regression coefficient is $\beta = c$ in this case. Thus, we treat $y_i - f(x_i,\theta^*)$ as a target and $\mathbf{J}_{\theta}(x_i, \theta^*)\diag({\theta^*})$ as an input of linear regression problem. By concatenating this, we get
\begin{equation*}
    \mathcal{Y} = f(\mathcal{X},\theta^*) + \mathbf{J}_{c}c + \varepsilon
    \Rightarrow
    \left( \mathcal{Y} - f(\mathcal{X},\theta^*)\right) = \mathbf{J}_{c}c + \varepsilon.
\end{equation*}
By plugging this to the posterior of Bayesian linear regression problem, we get
\begin{align*}
\mathbb{Q}_{\theta^*}(c) &:= \mathcal{N}(c| \mu_{\mathbb{Q}}, \Sigma_{\mathbb{Q}})\\
\mu_{\mathbb{Q}} &:= \frac{\Sigma_{\mathbb{Q}}\mathbf{J}_{c}^\top \left( \mathcal{Y} - f(\mathcal{X},\theta^*)\right)}{\sigma^2} = \frac{\Sigma_{\mathbb{Q}}\diag(\theta^*)\mathbf{J}_{\theta}^{\top}\left( \mathcal{Y} - f(\mathcal{X},\theta^*)\right)}{\sigma^2}\\
\Sigma_{\mathbb{Q}} &:= \left( \frac{\mathbf{I}_{P}}{\alpha^2} + \frac{\mathbf{J}_{c}^\top \mathbf{J}_{c}}{\sigma^2} \right)^{-1} = \left( \frac{\mathbf{I}_{P}}{\alpha^2} + \frac{\diag(\theta^*)\mathbf{J}_{\theta}^\top \mathbf{J}_{\theta} \diag(\theta^*)}{\sigma^2} \right)^{-1}
\end{align*}

\noindent\textbf{Derivation of $\mathbb{Q}_{\theta^*}(\psi)$}\\

We define perturbed parameter $\psi$ as follows
\begin{align*}
    \psi := \theta^* + \theta^* \odot c.
\end{align*}
Since $\psi$ is affine to $c$, we get the distribution of $\psi$ as
\begin{align*}
    \mathbb{Q}_{\theta^*}(\psi) &:= \mathcal{N}(\psi|\mu_{\mathbb{Q}}^{\psi}, \Sigma_{\mathbb{Q}}^{\psi})\\
    \mu_{\mathbb{Q}}^{\psi}&:= \theta^* + \theta^* \odot \mu_{\mathbb{Q}}\\
    \Sigma_{\mathbb{Q}}^{\psi}&:= \diag(\theta^*)\Sigma_{\mathbb{Q}}\diag(\theta^*)
    = \left( \frac{\diag(\theta^*)^{-2}}{\alpha^2} + \frac{\mathbf{J}_{\theta}^\top \mathbf{J}_{\theta}}{\sigma^2} \right)^{-1}
\end{align*}

\section{Representative cases of function-preserving scaling transformations}\label{sec:scaling-transformations}

\textbf{Activation-wise rescaling transformation \citep{tsuzuku2020nfm, neyshabur2015path}} For NNs with ReLU activations, following holds for $\forall x \in \mathbb{R}^{d}, \gamma >0$: $f(x, \theta) = f(x, \mathcal{R}_{\gamma,l,k}(\theta))$, where rescale transformation $\mathcal{R}_{\gamma,l,k}(\cdot)$\footnote{For a simple two layer linear NN $f(x):=W_2\sigma(W_1x)$ with weight matrix $W_1, W_2$, the first case of \eqref{eq:rescaling} corresponds to $k$-th row of $W_1$ and the second case of \eqref{eq:rescaling} corresponds to $k$-th column of $W_2$.} is defined as
\begin{align}
    \footnotesize{(\mathcal{R}_{\gamma,l,k}(\theta))_{i} = 
    \begin{cases}
    \gamma \cdot \theta_i&,\text{ if $\theta_i \in$ \{param. subset connecting as input edges to $k$-th activation at $l$-th layer\}}\\
    \theta_i / \gamma &,\text{ if $\theta_i \in$  \{param. subset connecting as output edges to $k$-th activation at $l$-th layer\}}\\\label{eq:rescaling}
    \theta_{i}&,\text{ for $\theta_i$ in the other cases}
    \end{cases}}
\end{align}%\label{eq:rescaling}
Note that $\mathcal{R}_{\gamma,l,k}(\cdot)$ is a finer-grained rescaling transformation than layer-wise rescaling transformation (i.e. common $\gamma$ for all activations in layer $l$) discussed in \citet{dinh2017sharp}. \citet{dinh2017sharp} showed that even layer-wise rescaling transformations can sharpen pre-trained solutions in terms of trace of Hessian (i.e., contradicting the FM hypothesis). This contradiction also occurs to previous PAC-Bayes bounds \citep{tsuzuku2020nfm, kwon2021asam} due to the scale-dependent term.

\textbf{Weight decay with BN layers \citep{ioffe2015batch}}
For parameters $W \in \mathbb{R}^{n_{l} \times n_{l-1}}$ preceding BN layer,
\begin{align}
    \mathrm{BN}((\diag(\gamma)W)u)  = \mathrm{BN}(Wu) \label{eq:bn-scale}
\end{align}
for an input $u\in\mathbb{R}^{n_{l-1}}$ and a positive vector $\gamma \in \mathbb{R}^{n_l}_{+}$. This implies that scaling transformations on these parameters preserve function represented by NNs for $\forall x \in \mathbb{R}^{d}, \gamma \in \mathbb{R}^{n_{l}}_{+}$: $f(x, \theta) = f(x, \mathcal{S}_{\gamma,l,k}(\theta))$, where scaling transformation $\mathcal{S}_{\gamma, l, k}(\cdot)$ is defined for $i=1,\dots,P$
\begin{align}
\left(\mathcal{S}_{\gamma,l,k}(\theta)\right)_{i} = 
    \footnotesize{\begin{cases}
    \gamma_{k} \cdot \theta_{i}&,\text{ if $\theta_i \in$ \{param. subset connecting as input edges to $k$-th activation at $l$-th layer\}}\\
    \theta_{i}&,\text{for $\theta_i$ in the other cases}
    \end{cases}}
\end{align}\label{eq:scale-invariance}
Note that the weight decay regularization \citep{loshchilov2018decoupled, zhang2018three} can be implemented as a realization of $\mathcal{S}_{\gamma, l, k}(\cdot)$~(e.g., $\gamma=0.9995$ for all activations preceding BN layers). Therefore, thanks to Theorem \ref{prop:distribution-invariance} and Theorem \ref{prop:ctk-invariance}, our CTK-based bound is invariant to weight decay regularization applied to parameters before BN layers. We also refer to \citep{van2017l2, lobacheva2021periodic} for optimization perspective of weight decay with BN.

\section{Implementation of Connectivity Laplace}\label{sec:kfac-cl}

To estimate the empirical/generalization bound in Sec. \ref{sec:computing-bound} and calibrate uncertainty in Sec. \ref{subsec:exp-CTKGP}, we need to sample $c$ from the posterior $\mathbb{Q}_{\theta^*}(c)$. For this, we sample perturbations $\delta$ in connectivity space
\begin{align*}
    \delta \sim \mathcal{N}\left(\delta|\mathbf{0}_{P}, \left( \frac{\mathbf{I}_{P}}{\alpha^2} + \frac{\mathbf{J}_{c}^{\top}\mathbf{J}_{c}}{\sigma^2}\right)^{-1} \right)
\end{align*}
so that $c = \mu_{\mathbb{Q}} + \delta$ for \eqref{eq:ctkgp-c-posterior-mean}. To sample this, we provide a novel approach to sample from LA/CL without curvature approximation. To this end, we consider following optimization problem
\begin{align*}
    \arg\min_{c}L(c) := \arg\min_{c}\frac{1}{2N\sigma^2}\|\mathcal{Y}-f(\mathcal{X},\theta^*)-\mathbf{J}_{c}c + \varepsilon \|^2 + \frac{1}{2N\alpha^2}\|c-c_0 \|^2_2\\
\end{align*}
where $\varepsilon \sim \mathcal{N}(\varepsilon | \mathbf{0}_{NK}, \sigma^2\mathbf{I}_{NK})$ and $c_0 \sim \mathcal{N}(c_0 | \mathbf{0}_{P}, \alpha^2\mathbf{I}_{P})$. By first-order optimality condition, we have
\begin{align*}
    N\nabla_{c}L(c) = -\frac{\mathbf{J}_{c}^\top(\mathcal{Y}-f(\mathcal{X},\theta^*)-\mathbf{J}_{c}c^* + \varepsilon)}{\sigma^2} + \frac{c^*-c_0}{\alpha^2}= \mathbf{0}_{P}.
\end{align*}
By arranging this w.r.t. optimizer $c^*$, we get
\begin{align*}
    c^* 
    &= \left(\mathbf{J}_{c}^\top \mathbf{J}_c + \frac{\sigma^2}{\alpha^2} \mathbf{I}_{P}\right)^{-1}\left( \mathbf{J}_{c}^\top (\mathcal{Y}- f(\mathcal{X},\theta^*)) + \frac{\sigma^2}{\alpha^2}c_0 + \mathbf{J}_c \varepsilon \right)
    \\&= \left(\mathbf{J}_{c}^\top \mathbf{J}_c + \frac{\sigma^2}{\alpha^2} \mathbf{I}_{P}\right)^{-1}\mathbf{J}_{c}^\top (\mathcal{Y}- f(\mathcal{X},\theta^*)) + \left(\mathbf{J}_{c}^\top \mathbf{J}_c + \frac{\sigma^2}{\alpha^2} \mathbf{I}_{P}\right)^{-1} \left( \frac{\sigma^2}{\alpha^2}c_0 + \mathbf{J}_c \varepsilon \right)
    \\&=\underbrace{\left(\frac{\mathbf{I}_{P}}{\alpha^2} + \frac{\mathbf{J}_c^\top \mathbf{J}_c}{\sigma^2 }\right)^{-1}\frac{\mathbf{J}_c^\top (\mathcal{Y}-f(\mathcal{X},\theta^*))}{\sigma^2}}_{\textrm{Deterministic}} + \underbrace{\left(\frac{\mathbf{I}_{P}}{\alpha^2} + \frac{\mathbf{J}_c^\top \mathbf{J}_c}{\sigma^2 }\right)^{-1}\left( \frac{c_0}{\alpha^2} + \frac{\mathbf{J}_{c}^\top \varepsilon}{\sigma^2} \right)}_\textrm{Stochastic}
\end{align*}
Since both $\varepsilon$ and $c_0$ are sampled from independent Gaussian distributions, we have
\begin{align*}
    z:= \left( \frac{c_0}{\alpha^2} + \frac{\mathbf{J}_{c}^\top \varepsilon}{\sigma^2} \right)
    \sim \mathcal{N}\left(z|\mathbf{0}_{P}, \frac{\mathbf{I}_P}{\alpha^2} + \frac{\mathbf{J}_c^\top \mathbf{J}_c}{\sigma^2}  \right)
\end{align*}
Therefore, optimal solution of randomized optimization problem $\arg\min_{c} L(c)$ is
\begin{align*}
    c \sim \mathcal{N}\left( c \, \Big|\left( \frac{\mathbf{I}_{P}}{\alpha^2} + \frac{\mathbf{J}_c^\top \mathbf{J}_c}{\sigma^2 }\right)^{-1}\frac{\mathbf{J}_c^\top (\mathcal{Y}-f(\mathcal{X},\theta^*))}{\sigma^2}, \left(\frac{\mathbf{I}_{P}}{\alpha^2} + \frac{\mathbf{J}_c^\top \mathbf{J}_c}{\sigma^2 }\right)^{-1} \right) = \mathcal{N}(c| \mu_\mathbb{Q}, \Sigma_\mathbb{Q})
\end{align*}
Similarly, sampling from CL can be implemented as a following optimization problem.
\begin{align*}
    \arg\min_{c}L(c) := \arg\min_{c}\frac{1}{2N\sigma^2}\|\mathbf{J}_{c}c - \varepsilon \|^2 + \frac{1}{2N\alpha^2}\|c-c_0 \|^2_2\\
\end{align*}
where $\varepsilon \sim \mathcal{N}(\varepsilon | \mathbf{0}_{NK}, \sigma^2\mathbf{I}_{NK})$ and $c_0 \sim \mathcal{N}(c_0 | \mathbf{0}_{P}, \alpha^2\mathbf{I}_{P})$. Since we sample the noise of data/perturbation and optimize the perturbation, this can be interpreted as a Randomize-Then-Optimize implementation of Laplace Approximation and Connectivity Laplace \citep{bardsley2014randomize, matthews2017sample}.

\section{Details of computing Connectivity Sharpness}\label{sec:hutchison}

It is well known that empirical NTK or Jacobian is intractable in modern architecture of NNs (e.g., ResNet \citep{he2016deep} or BERT \citep{devlin2018bert}). Therefore, one might wonder how Connectivity Sharpness can be computed for these architectures. However, Connectivity Sharpness in Sec. \ref{Sec:sub_CS} is defined as trace of empirical CTK, thereby one can compute CS with Hutchison's method \citep{hutchinson1989stochastic, meyer2021hutch++}. According to Hutchison's method, trace of a matrix $A \in \mathbb{R}^{m \times m}$ is 
\begin{align*}
    \mathrm{tr}(A) = \mathrm{tr}(A \mathbf{I}_{p}) = \mathrm{tr}(A\mathbb{E}_{z}[zz^\top]) = \mathbb{E}_{z}[\mathrm{tr}(Azz^\top)] = \mathbb{E}_{z}[\mathrm{tr}(z^\top Az)]
    = \mathbb{E}_{z}[z^\top Az]
\end{align*}
where $z \in \mathbb{R}^{m}$ is a random variable with $\mathrm{cov}(z) = \mathbf{I}_{m}$ (e.g., standard normal distribution or Rademacher distribution). Since $A=\mathbf{C}^{\theta^*}_{\mathcal{X}} = \mathbf{J}_{c}(\mathcal{M}, \mathbf{0}_{p}) \mathbf{J}_{c}(\mathcal{M}, \mathbf{0}_{p})^\top \in \mathbb{R}^{Nk}$ in our case, we further use mini-batch approximation to compute $z^\top A z$: (\romannumeral 1) Sample $z_M \in \mathbb{R}^{Mk}$ from Rademacher distribution for mini-batch $\mathcal{M}$ with size $M$ and (\romannumeral 2) compute $v_\mathcal{M} := \mathbf{J}_{c}(\mathcal{M}, \mathbf{0}_{p})^\top z_\mathcal{M} \in \mathbb{R}^{P}$ with Jacobian-vector product of JAX \citep{jax2018github} and (\romannumeral 3) compute $x_\mathcal{M} = \|v_\mathcal{M}\|^2_2$. Then, the sum of $x_M$ for all mini-batch in training dataset is a Monte-Carlo approximation of CS with sample size 1. Empirically, we found that this approximation is sufficiently stable to capture the correlation between sharpness and generalization as shown in Sec. \ref{subsec:gen-metric}.

\section{Predictive uncertainty of Connectivity/Linearized Laplace}\label{sec:derivation-pred-unc}

In this section, we derive predictive uncertainty of Linearized Laplace (LL) and Connectivity Laplace (CL). By matrix inversion lemma \citep{woodbury1950inverting}, the weight covariance of LL is 
\begin{align*}
    (\mathbf{I}_{p}/\alpha^2 + \mathbf{J}_{\theta}(\mathcal{X}, \theta^*)^\top \mathbf{J}_{\theta}(\mathcal{X}, \theta^*)/\sigma^2)^{-1}
    = 
    \alpha^2 \mathbf{I}_{p} - \alpha^2 \mathbf{J}_{\theta}(\mathcal{X}, \theta^*)^\top (\frac{\sigma^2}{\alpha^2} \mathbf{I}_{Nk} + \Theta_{\mathcal{XX}}^{\theta^*})^{-1}\mathbf{J}_{\theta}(\mathcal{X}, \theta^*).
\end{align*}
Therefore, if $\sigma^2/\alpha^2 \rightarrow 0$, then the weight covariance of LL converges to
\begin{align*}
    \alpha^2 \mathbf{I}_{p} - \alpha^2 \mathbf{J}_{\theta}(\mathcal{X}, \theta^*)^\top \Theta_{\mathcal{XX}}^{\theta^*-1}\mathbf{J}_{\theta}(\mathcal{X}, \theta^*).
\end{align*}
With this weight covariance and linearized NN, the predictive uncertainty of LL is
\begin{align*}
    f^\mathrm{lin}_{\theta^*}(x,\theta)&|p_\mathrm{LA}(\theta|\mathcal{S})
    \sim \mathcal{N}(f(x,\theta^*), \alpha^2 \Theta_{xx}^{\theta^*} - \alpha^2 \Theta_{x\mathcal{X}}^{\theta^*}\Theta_{\mathcal{XX}}^{\theta^*-1}\Theta_{\mathcal{X}x}^{\theta^*}).
\end{align*}
Similarly, the predictive uncertainty of CL is 
\begin{align*}
    f^\mathrm{lin}_{\theta^*}(x,\theta)&|\mathbb{Q}_{\theta^*}(\theta)
    \sim \mathcal{N}(f(x,\theta^*), \alpha^2 \mathbf{C}_{xx}^{\theta^*} -\alpha^2  \mathbf{C}_{x\mathcal{X}}^{\theta^*}\mathbf{C}_{\mathcal{XX}}^{\theta^*-1}\mathbf{C}_{\mathcal{X}x}^{\theta^*}).
\end{align*}

\section{Details on sharpness-generalization experiments}\label{supp:exp-corr}

To verify that the CS has a better correlation with generalization performance compared to existing sharpness measures, we evaluate the three metrics: (a) Kendall's rank-correlation coefficient $\tau$ \citep{kendall1938new} which considers the consistency of a sharpness measure with generalization gap (i.e., if one has higher sharpness, then so has higher generalization gap) (b) granulated Kendall's coefficient \citep{jiang2020fantastic} which examines Kendall's rank-correlation coefficient w.r.t. individual hyper-parameters to separately evaluate the effect of each hyper-parameter to generalization gap (c) conditional independence test \citep{jiang2020fantastic} which captures the causal relationship between measure and generalization.

\begin{table}[h]
    \centering
    \caption{Configuration of hyper-parameter}
    \begin{tabular}{c|c}
    \toprule
        network depth & 1, 2, 3 \\
        network width & 32, 64, 128 \\
        learning rate & 0.1, 0.032, 0.001 \\
        weight decay & 0.0, 1e-4, 5e-4 \\
        mini-batch size & 256, 1024, 4096 \\
    \bottomrule
    \end{tabular}
    \label{table:hyp-config}
\end{table}

Three metrics are compared with the following baselines: trace of Hessian ($\mathrm{tr}(\mathbf{H})$;  \citep{keskar2017large}), trace of Fisher information matrix ($\mathrm{tr}(\mathbf{F})$; \citep{jastrzebski2021catastrophic}), trace of empirical NTK at $\theta^*$ ($\mathrm{tr}(\mathbf{\Theta^{\theta^*}})$), and four PAC-Bayes bound based measures, sharpness-orig (SO), pacbayes-orig (PO), $1/\alpha'$ sharpness mag (SM), and $1/\sigma'$ pacbayes mag (PM), which are eq. (52), (49), (62), (61) in \citet{jiang2020fantastic}. 

For the granulated Kendall's coefficient, we use 5 hyper-parameters : network depth, network width, learning rate, weight decay and mini-batch size, along with 3 options for each hyper-parameters as in Table~\ref{table:hyp-config}. We use the VGG-13~\citep{Simonyan15} as a base model and we adjust the depth and width of each conv block. We add BN layers after the convolution layer for each block. Specifically, the number of convolution layers of each conv block is the depth and the number of channels of convolution layers of the first conv block is the width. For the subsequent conv blocks, we follow the original VGG width multipliers  ($\times 2$, $\times 4$, $\times 8$). An example with depth 1 and width 128 is depicted in Table~\ref{table:net-config}.

\begin{table}[h]
    \centering 
    \caption{Example of network configuration with respect to the depth 1, width 128 in \citep{Simonyan15}-style.}
    \begin{tabular}{|c|}
    \toprule
        ConvNet Configuration\\ \midrule
        input (224 $\times$ 224 RGB image)\\ \midrule
        Conv3-128\\
        BN\\
        ReLU\\ \midrule
        MaxPool \\ \midrule
        Conv3-256\\
        BN\\
        ReLU\\ \midrule
        MaxPool \\ \midrule
        Conv3-512\\
        BN\\
        ReLU\\ \midrule
        MaxPool \\ \midrule
        Conv3-1024\\
        BN\\
        ReLU\\ \midrule
        MaxPool \\ \midrule
        Conv3-1024\\
        BN\\
        ReLU\\ \midrule
        MaxPool \\ \midrule
        FC-4096 \\
        ReLU\\ \midrule
        FC-4096 \\ 
        ReLU\\ \midrule
        FC-1000 \\
    \bottomrule
    \end{tabular}
    \label{table:net-config}
\end{table}

We use SGD optimizer with a momentum 0.9. We train each model for 200 epochs and use cosine learning rate scheduler~\citep{loshchilov2016sgdr} with 30\% of initial epochs as warm-up epochs. The standard data augmentations (padding, random crop, random horizontal flip, and normalization) for CIFAR-10 is used for training data. For the analysis, we only use models with above 99\% training accuracy following \citet{jiang2020fantastic}. As a result, we use 200 out of 243 trained models for our correlation analysis. For every experiment, we use 8 NVIDIA RTX 3090 GPUs.

\section{Details and additional results on BNN experiments}\label{supp:exp-bnn}

\subsection{Experimental Setting}\label{subsec:exp-setting-uncertainty}

% \textbf{Toy 1D regression, 2D classification tasks}
% In 1D regression and 2D classification, we train the 3 hidden layer (50 hidden units each) MLP with BN for 10000 iterations. We use Adam optimizer~\citep{kingma2014adam} with peak learning rate 0.01 and cosine learning rate scheduler~\citep{loshchilov2016sgdr}. In the case of using weight decay, we use 0.0001 as the weight decay regularization coefficient (i.e. $\alpha=100.0$) and 1.0 for $\sigma$. For 2D classification task, we use the built-in dataset, \texttt{make\_circle} function with n\_samples=(100,100) and factor=0.5, of Scikit-learn~\citep{scikit-learn}.

\textbf{Uncertainty calibration on image classification task}
We pre-train models for 200 epochs CIFAR-10/100 dataset~\citep{krizhevsky2009learning} with ResNet-18\citep{he2016deep} as mentioned in Section \ref{sec:computing-bound}. We choose ensemble size $M$ as 8 except Deep Ensemble~\citep{lakshminarayanan2017deep} and Batch Ensemble~\citep{wen2020batchensemble}. We use 4 ensemble members for Deep Ensemble and Batch Ensemble due to computational cost. 

For evaluation, we define single member prediction as one-hot representation of network output with label smoothing. We select label smoothing coefficient as 0.01 for CIFAR-10, 0.1 for CIFAR-100. We define ensemble prediction as averaged prediction of single member predictions. For OOD detection, we use variance of prediction in output space, which is competitive to recent OOD detection methods~\citep{ren2019lrforood, van2020duq}. We use 0.01 for $\sigma$ and select best $\alpha$ with cross validation. For every experiment, we use 8 NVIDIA RTX 3090 GPUs.

\clearpage

\section{Additional results on bound estimation}\label{sec:lanczos}

\begin{table}[h]
\centering
\caption{Results for experiments on PAC-Bayes-NTK estimation.}
\label{tab:bound_ntk}
\resizebox{\textwidth}{!}{%
\begin{tabular}{@{}c|cccc|cccc@{}}
\toprule
                & \multicolumn{4}{c|}{CIFAR-10}             & \multicolumn{4}{c}{CIFAR-100}             \\ \midrule
Parameter scale & 0.5      & 1.0      & 2.0      & 4.0      & 0.5      & 1.0      & 2.0      & 4.0      \\ \toprule
$\mathrm{tr}(\Theta^{\theta^*}_{\mathcal{X}})$ & 18746194.0      & 6206303.5       & 3335419.75      & 2623873.25      & 12688970.0      & 3916139.25      & 2819272.5       & 2662497.0       \\
Bias            & 483.86   & 427.0042 & 299.0085 & 197.3149 & 476.9061 & 478.1776 & 440.284  & 329.8767 \\
Sharpness       & 579.6815 & 472.0    & 402.8186 & 369.3761 & 547.2874 & 434.7583 & 398.5075 & 387.3265 \\
KL divergence        & 531.7708 & 449.5021 & 350.9135 & 283.3455 & 512.0967 & 456.4679 & 419.3957 & 358.6016 \\
Test err.    & 0.5617 ± 0.0670 & 0.4566 ± 0.0604 & 0.2824 ± 0.0447 & 0.1530 ± 0.0199 & 0.6210 ± 0.0096 & 0.6003 ± 0.0094 & 0.5499 ± 0.0100 & 0.4666 ± 0.0093 \\ \midrule
PAC-Bayes-NTK       & 0.7985 ± 0.0694 & 0.6730 ± 0.0626 & 0.4718 ± 0.0465 & 0.3186 ± 0.0202 & 0.8530 ± 0.0140 & 0.8162 ± 0.0136 & 0.7602 ± 0.0112 & 0.6617 ± 0.0114 \\ \bottomrule
\end{tabular}
}
\end{table}

\section{Additional results on image classification}\label{subsec:additional-uncertainty}

% \subsubsection{1D regression}

% \input{fig/1d_reg_wo_wd}

% \input{fig/1d_reg_wo_bn}

% \newpage

% \subsubsection{2D classification}

% \input{fig/2d_cls_tanh}

% \input{fig/2d_cls_wo_bn}

%\input{table/cifar10-0.0-vgg}

\begin{table*}[h]
\caption{\footnotesize Uncertainty calibration results on CIFAR-10~\citep{krizhevsky2009learning} for VGG-13~\citep{Simonyan15}.}
\centering
\resizebox{\textwidth}{!}{%
\begin{tabular}{c|cccc}
\toprule
& \multicolumn{4}{c}{CIFAR-10}           \\ 
\midrule
&NLL~($\downarrow$)&ECE~($\downarrow$)&Brier.~($\downarrow$)&AUC~($\uparrow$) \\
\midrule

Deterministic & 0.4086 ± 0.0018	 & 0.0490 ± 0.0003 & 0.1147 ± 0.0005 & -\\
MCDO & 0.3889 ± 0.0049 & 0.0465 ± 0.0009 & 0.1106 ± 0.0015 & 0.7765 ± 0.0221 \\
MCBN & 0.3852 ± 0.0012 & 0.0462 ± 0.0002 & 0.1108 ± 0.0003 & 0.9051 ± 0.0065 \\
Batch Ensemble & 0.3544 ± 0.0036 & 0.0399 ± 0.0009 & 0.1064 ± 0.0012 & 0.9067 ± 0.0030 \\
Deep Ensemble & 0.2243 & 0.0121 & 0.0776 & 0.7706 \\\midrule
Linearized Laplace & 0.3366 ± 0.0013 & 0.0398 ± 0.0004 & 0.1035 ± 0.0003 & 0.8883 ± 0.0017 \\
Connectivity Laplace (Ours) & 0.2674 ± 0.0028 & 0.0234 ± 0.0011 & 0.0946 ± 0.0010 & 0.9002 ± 0.0033\\
\bottomrule 
\end{tabular}
}
\end{table*}

\begin{table*}[!ht]
\caption{\footnotesize Uncertainty calibration results on CIFAR-100 \citep{krizhevsky2009learning} for VGG-13 \citep{Simonyan15}.}
\centering
\resizebox{\textwidth}{!}{%
\begin{tabular}{c|cccc}
\toprule
& \multicolumn{4}{c}{CIFAR-100}           \\ 
\midrule
&NLL~($\downarrow$)&ECE~($\downarrow$)&Brier.~($\downarrow$)&AUC~($\uparrow$) \\
\midrule
Deterministic & 1.8286 ± 0.0066&	0.1544 ± 0.0010&	0.4661 ± 0.0018	 & -\\
MCDO & 1.7439 ± 0.0089&	0.1363 ± 0.0008&	0.4456 ± 0.0017&	0.6424 ± 0.0099 \\
MCBN & 1.7491 ± 0.0075	&0.1399 ± 0.0010	&0.4488 ± 0.0015	&0.7039 ± 0.0197 \\
Batch Ensemble & 1.6142 ± 0.0101	&0.1077 ± 0.0020	&0.4143 ± 0.0027	&0.7232 ± 0.0021 \\
Deep Ensemble & 1.2006	&0.0456	&0.3228	&0.6929 \\\midrule
Linearized Laplace & 1.5806 ± 0.0054	&0.1036 ± 0.0004	&0.4127 ± 0.0010	&0.6893 ± 0.0221 \\
Connectivity Laplace (Ours) & 1.4073 ± 0.0039	&0.0703 ± 0.0028	&0.3827 ± 0.0012	&0.7254 ± 0.0136\\
\bottomrule 
\end{tabular}
}
\end{table*}

% \subsubsection{Robustness to distributional shift}
% \input{fig/corruption}
% We train models for 200 epochs on corrupted CIFAR-10 dataset~\citep{hendrycks2018benchmarking} with ResNet-18\citep{he2016deep}. We use SGD optimizer with a learning rate 0.8 and cosine learning rate scheduler~\citep{loshchilov2016sgdr} with 5\% of initial epochs as warm-up epochs. We use a 5e-4 weight decay coefficient and 0.4 for MixUp~\citep{zhang2018mixup}. We use 1024 as a mini-batch size.
 
%%%%%%%%%%%%%%%%%%%%%%%%%%%%%%%%%%%%%%%%%%%%%%%%%%%%%%%%%%%%%%%%%%%%%%%%%%%%%%%
%%%%%%%%%%%%%%%%%%%%%%%%%%%%%%%%%%%%%%%%%%%%%%%%%%%%%%%%%%%%%%%%%%%%%%%%%%%%%%%

\end{document}